\documentclass[final,12pt]{colt2026} 

\usepackage{bbm}
\usepackage{cleveref}
\usepackage{thm-restate}

\newtheorem{thm}{Theorem}
\newtheorem{lem}{Lemma}

\newtheorem{prop}[thm]{Proposition}

\newtheorem{ass}[theorem]{Assumption}

\usepackage{booktabs}      
\usepackage[table]{xcolor} 
\usepackage{tabularray}    
\usepackage{pifont}        

\DeclareMathOperator*{\argmax}{arg\,max}

\newcommand{\E}{\mathop{\mathbb{E}}}
\newcommand{\Prob}{\mathbb{P}}

\newcommand{\As}{\mathcal A}

\newcommand{\Fs}{\mathcal F}

\newcommand{\Os}{\mathcal O}

\newcommand{\Rs}{\mathcal R}

\newcommand{\Us}{\mathcal U}

\title[Online Market Making]{Online Market Making and the Value of Observing the Order Book}
\usepackage{times}

\coltauthor{%
 \Name{Davide Maran} \Email{davide.maran@polimi.it}\\
 \addr Politecnico di Milano, Piazza Leonardo da Vinci, 32-36 - Città Studi, Milano (MI)\\
 \AND
 \Name{Marcello Restelli} \Email{marcello.restelli@polimi.it}\\
 \addr Politecnico di Milano, Piazza Leonardo da Vinci, 32-36 - Città Studi, Milano (MI)%
}

\begin{document}

\maketitle

\begin{abstract}
We study an online market-making problem in which a learner sequentially posts bid and ask prices for a single asset while interacting with traders holding private valuations. Unlike existing online learning formulations that assume fully censored feedback, we introduce an action-dependent feedback model inspired by real limit order books: when a trade occurs, the trader’s valuation remains hidden, whereas when no trade occurs, informative feedback about supply and demand is revealed. 
We show that this additional information fundamentally changes the learnability of the problem. In the stochastic setting with i.i.d.\ market prices, we propose an elimination-based algorithm that achieves $\widetilde \Os(\sqrt{T})$ regret with high probability, without requiring any smoothness assumptions on the distribution of trader valuations. We then extend this result to a broad class of mean-reverting price processes by considering both local, autoregressive dynamics and a weaker global drift condition based on cumulative deviations from the mean. Under either assumption, we establish high-probability $\widetilde \Os(\sqrt{T})$ regret bounds, relying on a new concentration inequality of independent interest. Finally, in the adversarial setting with oblivious prices, we design an explore-then-perturb algorithm that guarantees $\tilde \Os(T^{2/3})$ regret in expectation.
Our results quantify the value of observing the order book in online market making and demonstrate that even limited, action-dependent feedback can substantially improve regret guarantees compared to standard bandit feedback models.
\end{abstract}

\begin{keywords}%
Online learning;
Bandits with partial feedback;
Action-dependent feedback;
Elimination algorithms;
Mean-reverting processes%
\end{keywords}

\section{Introduction}

Market making is the activity performed by intermediaries who provide liquidity to an asset by simultaneously quoting buying prices (bid) and selling prices (ask). This function is essential for the efficiency of financial markets, as it reduces transaction costs and facilitates the immediate matching of supply and demand \citep{amihud1986asset}. Without market makers, investors would face wider spreads and increased price volatility \citep{glosten1988estimating, madhavan2000market}.

In the context of online learning, market making is framed as an iterative game between the agent, "maker", and a taker (which models the rest of the market). The agent decides sequentially one bid/ask pair $B_t,A_t$, without knowing the taker's private valuation $V_t$ or the future market value of the asset $M_t$, aiming to minimize regret relative to the best fixed strategy in hindsight. This perspective introduces a fundamental exploration-exploitation trade-off, where the agent must balance learning the latent distribution of trader valuations with the goal of maximizing immediate profit. Previous work in this domain \citep{cesa2024market} focused on a restricted feedback model where the agent only observes the market value $M_t$ and a binary indicator of whether a transaction occurred (i.e., $V_t \le B_t$ or $V_t \ge A_t$). Under such limited information, only weak regret guarantees, typically sublinear only under restrictive assumptions, can be achieved. 

This feedback accurately reflects the fact that, whenever the agent makes a deal, it is not possible to understand which was the true valuation $V_t$ of the taker. Nonetheless, we observe that, in the opposite case, when no deal happens, it is fair to assume the $V_t$ is revealed to the learner. In fact, this assumption aligns with the operational reality of modern electronic exchanges, where the Limit Order Book (LOB) serves as a public ledger of intent. While a completed transaction only reveals that a price was met, the absence of a trade at the maker's spread allows the agent to observe the surrounding "resting" limit orders, which explicitly represent the buy and sell valuations of other market participants. Observing the book of limit orders is vital for every modern market maker. This information is so valuable that industry leaders such as Jane Street, Citadel Securities, and Hudson River Trading invest hundreds of millions of dollars annually in high-speed, "Level 3" market data feeds to gain full visibility into these unexecuted orders. Global spending on financial market data reached a record \$44 billion in 2024 \citep{burton2025marketdata}, reflecting the industry’s consensus that observing the order book is not a luxury, but a fundamental requirement for effective market making.

\subsection{Original contribution}

The main contribution of this paper is the introduction of a novel action-dependent feedback model for online market making that closely reflects the information structure of real limit order books.
Unlike classical formulations, where feedback is either fully censored (bandit feedback) or fully revealed (full feedback), we propose a feedback mechanism in which the information received by the learner depends continuously on the chosen bid–ask spread.
This feedback structure captures a fundamental feature of real-world market making: information about demand and supply is revealed precisely when liquidity is \textit{not} consumed.
From an Online Learning perspective, this setting gives rise to a previously unexplored regime in which:
(i) the feedback is partial and censored,
(ii) the type of feedback depends on the learner’s action,
and (iii) informative feedback is obtained exactly when the instantaneous reward is zero.
To the best of our knowledge, this is the first formalization of such a feedback structure in the field of Online Learning.
Building on this model, we establish regret guarantees in three progressively more challenging environments:
\begin{enumerate}
    \item Stochastic prices. When the market price process is i.i.d. with unknown mean, we design an algorithm that achieves $\widetilde \Os(\sqrt T)$ regret with high probability.
    \item Mean-reverting prices.
    We extend the stochastic analysis to a broad class of mean-reverting price processes, significantly relaxing the i.i.d. assumption. Under a mild martingale-type condition, we show that the same $\widetilde \Os(\sqrt T)$ regret rate can still be achieved. This result is based on a simple yet novel concentration inequality of independent interest.
    \item Adversarial prices. When market prices are allowed to be an arbitrary oblivious sequence, we propose an explore-then-perturb algorithm that guarantees $\widetilde \Os(T^{\frac{2}{3}})$ regret in expectation. 
\end{enumerate}
In contrast, lower bounds for the setting with bandit feedback \citep{cesa2024market} show that a regret of $\Omega(T^{\frac{2}{3}})$ in the stochastic case, and of $\Omega(T)$ in the adversarial unless a Lipschitz condition on the c.d.f. of $V_t$ is met.
Our results quantify the value of observing the order book in online market making and show that even a limited, action-dependent form of feedback can dramatically improve theoretical guarantees.

\section{Setting}

In this paper, we model the market-making problem as a discrete-time online learning interaction between an agent (the "maker") and a sequence of "takers". Formally, the Online Market Making setting is defined by two main components: first, a sequence of takers who, at each round $t=1, \dots, T$, hold a private valuation $V_t$ for a single unit of the underlying asset; second, a sequence of market prices $M_t$ representing the asset's objective value at the end of each round.
At each round, the market maker selects a pair of bid and ask prices $(B_t, A_t)$ from the available action space, without prior knowledge of either $V_t$ or $M_t$. Consequently, three scenarios may arise:
\begin{itemize}
    \item \textbf{Maker Buys:} If $V_t \le B_t$, the taker sells the asset to the maker at price $B_t$, resulting in a reward of $M_t - B_t$ for the agent.
    \item \textbf{No Trade:} If $B_t < V_t < A_t$, no transaction occurs, and the agent receives a reward of $0$.
    \item \textbf{Maker Sells:} If $V_t \ge A_t$, the taker purchases the asset from the maker at price $A_t$, yielding a reward of $A_t - M_t$ for the agent.
\end{itemize}
Importantly, in real markets, the agent does not interact with a single taker at a time, and no taker is willing to buy or sell at the same $V_t$.
Focusing on a single private valuation $V_t$ per round is a necessary simplification, somewhat close to the Glosten-Milgrom model \citep{das2005learning, touzo2021information}, which serves as a powerful abstraction that captures the idea of "best opportunity of the market". In mathematical terms, the reward function of the agent writes as
\begin{equation}
    r(b,a; v,m):=\mathbbm{1}(v \le b)(m-b)+\mathbbm{1}(a \le v)(a-m)\label{eq:rewfun}
\end{equation}
for $v=V_t$ and $m=M_t$. The following assumptions, introduced by \cite{cesa2024market}, will be always considered:
\begin{ass}[Bounded price]\label{ass:bounded}
    At any time step $V_t,M_t\in [0,1]$.
\end{ass}
\begin{ass}[Stochastic evaluation]\label{ass:stoch_val}
    The evaluations $V_t$ are sampled from a probability distribution with c.d.f. $F:[0,1]\to [0,1]$ independently at any time step. By convenience, we always call $S(x):=1-F(x)$ the corresponding survival function.
\end{ass}
From \Cref{ass:bounded}, it follows that the agent has no need to choose any pair $(b,a)$ outside of the following set $\Us:=\{b,a: 0\le b < a\le 1\}.$
The former thus defines the agent's action space. We do not yet need explicit assumptions on $M_t$. Different results apply when $M_t$ is stochastic, i.i.d., and independent of $V_t$ (stochastic case) versus when it is an arbitrary sequence, as shown by \cite{cesa2024market}. Our main novelty relative to that paper is the following "order book access" assumption.
\begin{ass}[Order book access]\label{ass:order}
    At the end of the round, if $\boxed{B_t< V_t< A_t}$ the agent observes $V_t,M_t$, otherwise only $M_t$.
\end{ass}
In other words, \textit{information arrives exactly when reward does not}. The former assumption reflects a real problem of market makers: if $V_t$ falls outside of $(B_t, A_t)$, the deal occurs at one of the extrema, and the agent has no possibility to infer the value at which the taker was willing to buy or sell; if $V_t$ falls within the spread, its value becomes observable through the clearing of limit orders, allowing the agent to capture the transaction price resulting from the interaction between other market participants. 
From an Online Learning perspective, this feedback structure is particularly noteworthy. Specifically, it yields full feedback when $V_t \in (B_t, A_t)$ and bandit feedback otherwise. Given that the agent receives zero reward whenever $V_t$ falls within this interval, such a hybrid feedback mechanism significantly complicates the exploration-exploitation trade-off. 
The agent's goal is to minimize regret, measured w.r.t. the optimal bid-ask choice in hindsight.
Calling
\begin{equation}J_t(a,b):=F(b)(M_t-b)+S(a)(a-M_t),\label{eq:obj}\end{equation}
\Cref{ass:stoch_val} allows us to write the regret as follows
\begin{align}
    R_T&=\sup_{b,a\in [0,1]}\E_{V_t}\left[\sum_{t=1}^T r(b,a,V_t,M_t)-r(B_t,A_t,V_t,M_t)\right]\label{eq:regdef}\\
    &=\sup_{b,a\in [0,1]}\sum_{t=1}^T J_t(b,a)-J_t(B_t,A_t).
\end{align}
Given the particular structure of $J_t$ in \Cref{eq:obj}, the supremum of the sum which appears in \Cref{eq:regdef} may be written in closed form as
\begin{equation}
    \sup_{b,a\in [0,1]}\frac{1}{T}\sum_{t=1}^T J_t(b,a)=\sup_{b,a\in [0,1]}(\widehat \mu_T-b)F(b)+(a-\widehat   \mu_T)S(a),\qquad \widehat   \mu_T=\frac{1}{T}\sum_{t=1}^TM_t.\label{eq:obj2}
\end{equation}
As already noted, the former definition says nothing about the sequence $M_t$.
In the following section, we are going to explore the theoretical guarantees that correspond to some assumptions on the market price. The following sections consider different scenarios in order of generality. First, the case of a sequence of independent random variables (\cref{sec:stoc}), then a generalization which introduces different forms of temporal correlation which formalize the concept of \textit{mean reversion} (\cref{sec:gene}) and, finally, the one when $M_t$ is an arbitrary sequence which cannot adapt to the choices of the agent (\cref{sec:adv}).

\section{Stochastic Independent Market Prices}\label{sec:stoc}
We begin by examining the case where $M_t$ is an independent stochastic process.
In this section, we assume the market price follows an independent process with unknown mean $\mu$. 
\begin{ass}[Stochastic independent market price]\label{ass:stoch_mark}
    The evaluations $M_t$ are an independent sequence with mean $\mu$.
\end{ass}
Under 
\Cref{ass:stoch_mark}, the environment is fully stochastic. In this setting, the most natural approach is to estimate $\mu$ with a sample mean estimator $\widehat \mu_t$ and plan the bid-ask spreads in order to be a near-maximizer of a surrogate reward function which takes the form $\widetilde F(b)(\widehat \mu_t-b)+\widetilde S(a)(a-\widehat \mu_t)$. 

Dealing with the estimator for the mean is the easy part. Indeed, by a relatively standard version of Azuma-Hoeffding's inequality that we show in appendix \ref{app:stoc}, with probability at least $1-\delta$,
\begin{equation}
    \forall 1\le t\le T\qquad |\widehat\mu_t-\mu|\le \sqrt{\frac{\log(3\log(T)/\delta)}{t}}.\label{eq:hoef2}
\end{equation}
Since the sample mean converges quickly to the true mean, in this setting there is essentially no difference between measuring the regret \Cref{eq:obj2} with respect to $\mu$ or $\widehat \mu_T$. In similar stochastic settings, one therefore often uses the following notion of regret (pseudo-regret) as the objective.
\begin{equation}
    \Rs_T=T\cdot \sup_{b,a\in [0,1]}J(b,a)-\sum_{t=1}^T J(B_t,A_t)\qquad J(b,a)=F(b)(\mu-b)+S(a)(a-\mu).\label{eq:pseudo}
\end{equation}

In the stochastic setting, pseudo-regret is close to the regret with high probability, as formalized in \Cref{prop:stoch_reg}.
The other ingredient to perform this step is an estimator $\widetilde F$ for the c.d.f. of $V_t$. Define the empirical c.d.f. as follows:
\begin{equation}
    \widehat F_t(x)=\frac{\sum_{\tau=1}^t \mathbbm{1}(V_\tau\le x)}{t}.\label{eq:ecdf}
\end{equation}
By the Glivenko-Cantelli theorem (see Chapter 19 in \citep{van2000asymptotic}), this estimator is known to converge to $F$ \textit{uniformly}. Unfortunately, in our interaction $\widehat F_t$ cannot always be built, as the agent is only aware of $\text{clip}(V_t; B_t,A_t)$ at any step $t$, by \Cref{ass:order}. When playing a spread $A_t-B_t$ that is too narrow, the agent gathers no information to improve this estimate. On the other hand, when the agent receives information, i.e., $B_t< V_t< A_t$, no trade occurs; therefore, the agent receives no reward. To address these two problems, we will propose an algorithm that is both elimination-based, to ensure that the c.d.f. of all active prices is estimated at each time step, and optimistic, to ensure that the optimal pair $a,b$ is not discarded with high probability. 

\subsection{Algorithm}

The learner faces a tension between narrowing the spread to gain reward and widening it to gain information. OPSR (\cref{alg:OPSR}) resolves this by shrinking the action set only when confidence intervals certify suboptimality, while always playing the most conservative surviving quotes.
This algorithm keeps in memory the value of the e.c.d.f. for all "interesting" prices, which need to be in a discrete set.
Therefore, define the following uniform quantization
$$\As := \left\{ \frac{n}{\lceil \sqrt{T} \rceil} : n = 0, 1, \dots, \lceil \sqrt{T} \rceil \right\}.$$
At any time-step, our algorithm is going to choose a pair $B_t,A_t\in \As$ and we will call $\As_t:=\As \cup [B_t,A_t]$. 
While the agent cannot always access the value $V_t$, which is formally required to build \cref{eq:ecdf}, the observation is sufficient to build this surrogate 
$$V_t^\text{clip}:=
\begin{cases}
B_t&\quad V_t< B_t\\
V_t&\quad B_t\le V_t\le A_t\\
A_t+0.01&\quad V_t>A_t
\end{cases}\qquad \chi_t(x):=\mathbbm{1}(V_t^\text{clip}\le x).$$
If $x\in [B_t,A_t]$, then $\{V_t^\text{clip}\le x\}$ corresponds to $\{V_t\le x\}$, whichever the value of $V_t$ (in fact, the choice of $0.01$ is arbitrary, any positive number would work for for purposes). Taking any price $x$ that belongs to the whole sequence of spreads, the following proposition holds.
\begin{restatable}{prop}{ecdf}\label{prop:ecdf}
    For any $t\in [T], x\in \bigcap_{\tau=1}^t \As_\tau$,
    $\widehat F_t(x)=\frac{1}{t}\sum_{\tau=1}^t\chi_\tau(x)$ and, for any $\delta>0$,
    $$\Prob\left(\exists t\in [T],\ \exists x \in \bigcap_{\tau=1}^t \As_\tau,\quad |\widehat F_t(x)-F(x)|>\sqrt{\frac{3\log(3T/\delta)}{4t}}\right)\le \delta.$$
\end{restatable}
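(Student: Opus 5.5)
The plan has two parts: a deterministic identity showing that $\chi_\tau$ reproduces the true indicator on the spread, and a concentration bound for the \emph{true} empirical c.d.f. that holds uniformly over $x$. The key observation is that the bad event in the statement is contained in an event that involves only the i.i.d.\ valuations $V_1,\dots,V_T$ and not the learner's actions. This removes any issue with the adaptive choice of $(B_\tau,A_\tau)$.

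\textbf{Identity.} Fix $\tau$ and $x\in[B_\tau,A_\tau]$; this holds for every $x\in\As_\tau$ in the intersection. I check the three cases in the definition of $V_\tau^{\text{clip}}$.
\begin{itemize}
\item If $V_\tau<B_\tau$, then $V^{\text{clip}}_\tau=B_\tau\le x$ and also $V_\tau\le x$, so both indicators equal $1$.
\item If $B_\tau\le V_\tau\le A_\tau$, then $V^{\text{clip}}_\tau=V_\tau$ and the indicators coincide trivially.
\item If $V_\tau>A_\tau$, then $V^{\text{clip}}_\tau=A_\tau+0.01>x$ and $V_\tau>A_\tau\ge x$, so both indicators equal $0$.
\end{itemize}
I also check that $V^{\text{clip}}_\tau$ is computable under \Cref{ass:order}. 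At the boundary values $V_\tau=B_\tau$ or $V_\tau=A_\tau$ a trade occurs, and the learner knows which side executed, so $V_\tau$ is known exactly. Hence $\chi_\tau(x)=\mathbbm{1}(V_\tau\le x)$ for every $x\in\bigcap_{\tau\le t}\As_\tau$ and every $\tau\le t$. Averaging over $\tau$ gives $\widehat F_t(x)=\frac1t\sum_{\tau\le t}\chi_\tau(x)$.

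\textbf{Concentration.} The bad event is contained in
\[
E:=\Big\{\exists t\in[T]:\ \sup_{x\in[0,1]}|\widehat F_t(x)-F(x)|>\varepsilon_t\Big\},\qquad \varepsilon_t:=\sqrt{\tfrac{3\log(3T/\delta)}{4t}},
\]
where $\widehat F_t$ is the genuine empirical c.d.f.\ of $V_1,\dots,V_t$. By \Cref{ass:stoch_val} the $V_\tau$ are i.i.d.\ with c.d.f.\ $F$ regardless of the learner's behaviour, so $E$ is independent of the actions. For each fixed $t$, I apply the Dvoretzky--Kiefer--Wolfowitz inequality with Massart's constant:
\[
\Prob\Big(\sup_x|\widehat F_t(x)-F(x)|>\varepsilon_t\Big)\le 2e^{-2t\varepsilon_t^2}=2\Big(\tfrac{\delta}{3T}\Big)^{3/2}.
\]
A union bound over $t\in[T]$ then gives
\[
\Prob(E)\le 2T\Big(\tfrac{\delta}{3T}\Big)^{3/2}\le\delta,
\]
using $\delta\le 1$ and $T\ge1$.

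\textbf{Fallback and the only delicate point.} If one prefers to avoid DKW, the same computation works with a pointwise Hoeffding bound and a union bound over the finite grid $\As$ of size $\lceil\sqrt T\rceil+1$. The extra factor $\sqrt T+2$ is absorbed by the spare $T^{-1/2}$ coming from the exponent $3/2$. This is presumably where the constant $3/4$ in $\varepsilon_t$ comes from. That route needs the relevant $x$'s to lie in $\As$, which is the intended reading of $\As_\tau$ as the grid points inside the spread. I do not expect a real obstacle beyond this: the only subtle step is reducing to the action-independent event $E$, which sidesteps the adaptivity of the spreads.
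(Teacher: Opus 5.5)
Your proof is correct. The identity part is the same three-case check the paper does, and you additionally verify that both indicators agree in each case, which the paper leaves implicit.

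The concentration part takes a different route. The paper applies Hoeffding pointwise to the genuine empirical c.d.f. It then takes a union bound over $t\in[T]$ and over the deterministic grid $\As$, using $\bigcap_{\tau}\As_\tau\subseteq\As$. The constant $3/4$ is exactly what absorbs the extra $\log(\sqrt T+1)$, so your fallback is the paper's proof. You instead invoke DKW with Massart's constant uniformly over $x\in[0,1]$ and union bound only over $t$.

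Both arguments neutralize the adaptivity of the spreads the same way: they enlarge the random index set to an action-independent one ($\As$ for the paper, $[0,1]$ for you). You make this explicit, whereas the paper leaves it implicit.

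What your route buys:
\begin{itemize}
\item It does not need the index set to lie in the finite grid. The concentration part is therefore insensitive to the typo $\As\cup[B_\tau,A_\tau]$ in the definition of $\As_\tau$, and it would survive a continuous action set.
\item It has slack: DKW alone already gives the smaller radius $\sqrt{\log(2T/\delta)/(2t)}$.
\end{itemize}
What the paper's route buys is that it is more elementary, needing only Hoeffding. Your closing step relies on the explicit constant $2$ from Massart; with an unspecified DKW constant, the union bound would only close for $T$ large enough.

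One side remark in your proposal is off, though it does not affect the proposition. The statement is an identity between functions of $V_\tau$, not a claim about what the learner can compute. An executed order reveals only $V_\tau\le B_\tau$ or $V_\tau\ge A_\tau$, not $V_\tau$ exactly, so the claim that $V_\tau$ is then known exactly is false. On the bid side this is harmless, since $V^{\text{clip}}_\tau=B_\tau$ whenever $V_\tau\le B_\tau$. On the ask side, however, $V_\tau=A_\tau$ and $V_\tau>A_\tau$ give different values of $\chi_\tau(A_\tau)$. The feedback of \Cref{ass:order} cannot distinguish these two cases when $F$ has an atom at $A_\tau$.
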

The former proposition shows that the estimated c.d.f. concentrates nicely, almost as fast as $\widehat \mu_t$ for $x\in \bigcap_{\tau=1}^t \As_\tau$. Outside of this set, the estimation of $\widehat F_t$ gets more difficult, if not impossible. Therefore, the idea of our algorithm is to let the sets $\As_t$ be monotonically decreasing, so that $\bigcap_{\tau=1}^t \As_\tau=\As_t$. In this way, the agent is guaranteed to have all the information available for all "interesting" prices at any time step. Once $x\notin \As_t$ at one time step, meaning $x<B_t$ or $x>A_t$, there is no chance this price is played again. This idea is the same as elimination-based algorithms \citep{even2006action}. Eliminating an arm is risky and may lead the agent to pay linear regret. Therefore, we define the following optimistic/pessimistic estimators for the c.d.f.
\begin{align}
    \widehat F_t^\text{up}(x)=\min\{\widehat F_{t-1}^\text{up}(x),\widehat F_t(x)+&\psi(t,\delta)\},\qquad \widehat F_t^\text{low}(x)=\max\{\widehat F_{t-1}^\text{low}(x),\widehat F_t(x)-\psi(t,\delta)\}\label{eq:bid_ci}\\
    &\psi(t,\delta):=\sqrt{\frac{3\log(3T/\delta)}{4t}},
\end{align}
where $\widehat F_0^\text{up}(x):=1$ and $\widehat F_0^\text{low}(x):=0$.
When dealing with asks instead of bids, it is more natural to talk about the survival function than the cumulative distribution. Indeed, to get its optimistic/pessimistic estimators, one just needs to define $\widehat S_t^\text{up}(x)=1-\widehat F_t^\text{low}(x)$ and $\widehat S_t^\text{low}(x)=1-\widehat F_t^\text{up}(x)$. The same can be done for $\mu$, exploiting \cref{eq:hoef2},
\begin{align}
    \widehat \mu_t^\text{up}:=\min\{\widehat \mu_{t-1}^\text{up},\widehat \mu_t &+\phi(t,\delta)\} \qquad \widehat \mu_t^\text{low}:=\max\{\widehat \mu_{t-1}^\text{low},\widehat \mu_t -\phi(t,\delta)\}\\ &\phi(t,\delta):=\sqrt{\frac{\log(3\log(T)/\delta)}{t}}\label{eq:mu_ci},
\end{align}
where $\widehat \mu_0^\text{up}:=1, \widehat \mu_0^\text{low}:=0$.
The algorithm presented in \cref{alg:OPSR} leverages all available estimators to eliminate sub-optimal bid-ask pairs until it converges to a quasi-optimal solution.
Its scheme alternates between optimistic and pessimistic estimates for each element in $\As_{t-1}$. At the beginning of each round, optimistic estimates for $(x-\mu)F(x)$ and $(\mu-x)S(x)$ are computed in lines \ref{algline:optibid} and \ref{algline:optiask}, respectively. Subsequently, pessimistic estimates are calculated for every element in $\As_{t-1}$ (line \ref{algline:pessbid} and line \ref{algline:pessask}), from which the respective maxima $\Gamma_\text{bid}^*$ and $\Gamma_\text{ask}^*$ are derived. Finally, the spread is narrowed in lines \ref{algline:restbid} and \ref{algline:restask} by increasing $B_t$ until an element satisfying $\Theta_\text{bid}(x) \ge \Gamma_\text{bid}^*$ is encountered, with $A_t$ being decreased analogously. This last step is crucial: even if heavily sub-optimal bid-ask pairs remain within $\As_t$, they do not increase the regret, as the played values are always the extreme points of the interval.

\begin{algorithm2e}[t]
\everypar={\nl}
\RestyleAlgo{ruled}
\LinesNumbered
\caption{Optimistic/Pessimistic Successive Rejects (OPSR)}\label{alg:OPSR}
\DontPrintSemicolon
\KwData{Time horizon $T$.}
Set $B_0\gets 0,A_0\gets 1$\label{algline:init}\;

\For{$t = 1,\dots T$}{

    Play $(B_{t-1}, A_{t-1})$\label{algline:play}\;

    Update \Cref{eq:bid_ci} and \Cref{eq:mu_ci}\;

    \For{$x \in \As_{t-1}$}{

        $\Theta_\text{bid}(x)\gets (\widehat \mu_t^\text{low}-x)\widehat F_t^\text{up}(x)$\label{algline:optibid}\;
    
        $\Theta_\text{ask}(x)\gets (x-\widehat \mu_t^\text{up})\widehat S_t^\text{up}(x)$\label{algline:optiask}\;

        $\Gamma_\text{bid}(x)\gets (\widehat \mu_t^\text{low}-x)\widehat F_t^\text{low}(x)$\label{algline:pessbid}

        $\Gamma_\text{ask}(x)\gets (x-\widehat \mu_t^\text{up})\widehat S_t^\text{low}(x)$\label{algline:pessask}
    }
    
    $\Gamma_\text{bid}^* \gets \max_{x\in \As_{t-1}}\Gamma_\text{bid}(x)$\;
    $\Gamma_\text{ask}^* \gets \max_{x\in \As_{t-1}}\Gamma_\text{ask}(x)$\;

    $B_t\gets \min\{a\in \As_{t-1}: \Theta_\text{bid}(a)\ge \Gamma_\text{bid}^*\}$\label{algline:restbid}\;

    $A_t\gets \max\{a\in \As_{t-1}: \Theta_\text{ask}(a)\ge \Gamma_\text{ask}^*\}$\label{algline:restask}\;
}
\end{algorithm2e}

This structure allows OPSR to have a regret guarantee under our assumptions. First, we show that these guarantees hold outside the following failure event. Define
\begin{equation}E:= \bigcup_{t=1}^T\left\{\bigcup_{a\in \As_t} |\widehat F_t(x) - F(x)|>\psi(t,\delta)\right\}\cup \left\{|\widehat \mu_t-\mu|> \phi(t,\delta)\right\}\label{eq:failure}.\end{equation}
This event corresponds to the fact that at least one of our confidence bounds in \Cref{eq:bid_ci} and \Cref{eq:mu_ci} fails. As already proved in \Cref{eq:hoef2} and \Cref{prop:ecdf}, $\Prob(E)$ does not exceed $2\delta$. The following theorem shows that, as long as $E$ does not hold, the regret is under control.
\begin{restatable}{thm}{regretstoch}\label{thm:regstoc}
    Under Assumptions \ref{ass:bounded} and \ref{ass:order}. Under $E^c$ (the failure event in \Cref{eq:failure}), the pseudo-regret suffered by \Cref{alg:OPSR} is bounded by
    $\Rs_T\le\sqrt T+2 + 4\sum_{t=1}^{T-1} \psi(t,\delta)+\phi(t,\delta)$
    which, for $\phi,\psi$ as in Equations \eqref{eq:mu_ci} and \eqref{eq:bid_ci}, writes as
    $$\Rs_T\le \sqrt{48T\log(3T/\delta)}+\Os\left(\sqrt{T\log (\log (T))}\right).$$
\end{restatable}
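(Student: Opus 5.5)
The proof will work entirely on $E^c$ and rests on three facts. The objective separates as $J(b,a)=g(b)+h(a)$ with $g(b)=(\mu-b)F(b)$ and $h(a)=(a-\mu)S(a)$, so the bid and the ask can be analyzed separately and the two contributions added. Throughout, the pair played at round $t+1$ is $(B_t,A_t)$.

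The first step is to show that, on $E^c$, the sets $\As_t$ are nested and the confidence bounds are valid at every surviving price. By construction $B_t\in\As_{t-1}$ and $A_t\in\As_{t-1}$, and we need $B_t\ge B_{t-1}$, $A_t\le A_{t-1}$, and $B_t<A_t$. Nestedness makes $\bigcap_{\tau\le t}\As_\tau=\As_t$, so \Cref{prop:ecdf} applies to every $x\in\As_t$. It then gives $\widehat F^\text{low}_t\le F\le \widehat F^\text{up}_t$ there, and \eqref{eq:hoef2} gives $\widehat\mu^\text{low}_t\le\mu\le\widehat\mu^\text{up}_t$. With these sandwiches, $\Theta_\text{bid}$ is an upper bound and $\Gamma_\text{bid}$ a lower bound on $g$, with a sign caveat when $\widehat\mu^\text{low}_t-x<0$ that has to be handled. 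The same holds for the ask side.

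The second step shows that the best bid stays available. Let $b^*_\As$ maximize $g$ over the grid $\As$. Quantization costs at most $1/\lceil\sqrt T\rceil$ per round on each side, because $g(b)\le g(b')+(b'-b)$ for the grid point $b'\le b$, using monotonicity of $F$. Over two sides and $T$ rounds this accounts for the $\sqrt T$ term. By induction, $b^*_\As\ge B_t$: if $b^*_\As\in\As_{t-1}$, then
$$\Theta_\text{bid}(b^*_\As)\ge g(b^*_\As)\ge \max_{x\in\As_{t-1}} g(x)\ge\Gamma^*_\text{bid},$$
so the minimum defining $B_t$ is at most $b^*_\As$.

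The third step bounds the per-round loss. The played bid satisfies $\Theta_\text{bid}(B_t)\ge\Gamma^*_\text{bid}\ge\Gamma_\text{bid}(b^*_\As)$. Hence
$$g(b^*_\As)-g(B_t)\le \big[\Theta_\text{bid}(B_t)-\Gamma_\text{bid}(B_t)\big]+\big[\Theta_\text{bid}(b^*_\As)-\Gamma_\text{bid}(b^*_\As)\big]+(\text{error from }\widehat\mu^\text{low}_t\text{ vs }\mu).$$
Each width is $|\widehat\mu^\text{low}_t-x|(\widehat F^\text{up}_t-\widehat F^\text{low}_t)\le 2\psi(t,\delta)$, and the error from substituting $\mu$ contributes at most about $2\phi(t,\delta)$. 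The net is roughly $2(\psi+\phi)$ per side, so $4(\psi+\phi)$ per round for both sides. The first round contributes at most $1$ per side, which gives the $+2$. Summing $\sum_{t\le T}\sqrt{1/t}\le 2\sqrt T$ gives $4\cdot 2\sqrt T\sqrt{3\log(3T/\delta)/4}=\sqrt{48T\log(3T/\delta)}$, with the $\phi$ terms giving $\Os(\sqrt{T\log\log T})$.

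The main obstacle is step one. The claims $B_t\ge B_{t-1}$ and $B_t<A_t$ are not syntactically enforced, since the minimum runs over all of $\As_{t-1}$. I would argue from the monotonicity of the $\min/\max$-updated confidence bounds, which makes $\Gamma^*$ nondecreasing and the $\Theta$'s nonincreasing in $t$, so a price rejected earlier stays rejected. For the gap $B_t<A_t$, I would compare with $\mu$: on $E^c$, an optimistic value of order $(\mu-x)$ with $x\ge\mu$ is nonpositive while $\Gamma^*\ge0$ at $x=0$, which should keep $B_t\lesssim\mu\lesssim A_t$. Some care with the sign of $\widehat\mu^\text{low}-x$, and with ties at $\mu$, will be needed.
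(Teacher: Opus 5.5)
Your skeleton is the paper's: quantize, show the grid optimum is never eliminated, use $\Theta_{\text{bid}}(B_t)\ge\Gamma^*_{\text{bid}}\ge\Gamma_{\text{bid}}(b^\star)$ in every round, and sum. The gap is in the survival step, which rests on a sandwich that is false for this algorithm. $\Theta_{\text{bid}}(x)=(\widehat\mu^{\text{low}}_t-x)\widehat F^{\text{up}}_t(x)$ uses the \emph{lower} confidence bound on $\mu$, and $\Theta_{\text{ask}}$ uses $\widehat\mu^{\text{up}}_t$. So on $E^c$, $\Theta_{\text{bid}}$ dominates $(\widehat\mu^{\text{low}}_t-x)F(x)$, not $g(x)$. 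For instance, if $F(x)=1$ then $\widehat F^{\text{up}}_t(x)=1$ and $\Theta_{\text{bid}}(x)-g(x)=\widehat\mu^{\text{low}}_t-\mu<0$.

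Hence your chain $\Theta_{\text{bid}}(b^\star)\ge g(b^\star)\ge\Gamma^*_{\text{bid}}$ (with $b^\star$ your $b^*_{\As}$) breaks at the first inequality. This is not a sign caveat: it fails even when $\widehat\mu^{\text{low}}_t-x>0$. Patching it with an error term only gives $\Theta_{\text{bid}}(b^\star)\ge\Gamma^*_{\text{bid}}-2\phi(t,\delta)$, which does not stop $b^\star$ from being eliminated. Once $b^\star$ is gone, your step 3 has no anchor, since $\Gamma^*_{\text{bid}}\ge\Gamma_{\text{bid}}(b^\star)$ needs $b^\star\in\As_{t-1}$.

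The missing idea, which is the content of \Cref{lem:survival}, is a monotone comparison that makes the shared pessimistic shift of the mean harmless. $B_t$ is a minimum, and $\Gamma^*_{\text{bid}}\ge 0$ (it starts nonnegative and is nondecreasing by \Cref{lem:elim}). So a maximizer $x$ of $\Gamma_{\text{bid}}$ with $x\le\min\{b^\star,\widehat\mu^{\text{low}}_t\}$ would satisfy $\Theta_{\text{bid}}(x)\ge\Gamma_{\text{bid}}(x)=\Gamma^*_{\text{bid}}$, forcing $B_t\le b^\star$. The only threat is therefore a maximizer with $b^\star<x\le\widehat\mu^{\text{low}}_t$, and for it $F(x)\ge F(b^\star)$ gives
\begin{align*}
\Gamma_{\text{bid}}(x)&\le(\widehat\mu^{\text{low}}_t-x)F(x)=g(x)-(\mu-\widehat\mu^{\text{low}}_t)F(x)\\
&\le g(b^\star)-(\mu-\widehat\mu^{\text{low}}_t)F(b^\star)=(\widehat\mu^{\text{low}}_t-b^\star)F(b^\star)\le\Theta_{\text{bid}}(b^\star).
\end{align*}
The competitor trades more often, so underestimating $\mu$ hurts it at least as much as it hurts $b^\star$. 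The ask side is symmetric with $\widehat\mu^{\text{up}}_t$ and $S$.

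Conversely, what you flag as the main obstacle is not one. Reading $\As_t$ as the intended $\As\cap[B_t,A_t]$, the fact that $B_t,A_t\in\As_{t-1}\subseteq[B_{t-1},A_{t-1}]$ gives nestedness for free. Strict $B_t<A_t$ is never needed either. Choose $b^\star\le\mu\le a^\star$, which is possible since $g\le 0$ to the right of $\mu$ while $g(0)\ge 0$, and symmetrically for the ask. Then proving $B_t\le b^\star$ and $A_t\ge a^\star$ already keeps both in every $\As_t$.

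Two smaller corrections. First, the quantization bound $g(b)\le g(b')+(b'-b)$ needs the grid point $b'\ge b$, i.e.\ round bids up and asks down, as the paper does. Rounding a bid down can drop an atom of $F$, and no continuity is assumed. Second, your own count (two widths of $2\psi$ plus about $2\phi$) gives $4\psi+2\phi$ per side, not $2(\psi+\phi)$. So the constant $\sqrt{48}$ does not follow from your bookkeeping.
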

Building on the previous result, we derive an upper bound on the regret by bounding the failure probability and the gap between the regret and the pseudo-regret. By \Cref{thm:regstoc}, \Cref{prop:ecdf}, and \Cref{prop:stoch_reg}, for \Cref{alg:OPSR} we have, with probability at least $1-\delta$,
$$R_T\le \sqrt{48T\log(9T/\delta)}+\Os\left(\sqrt{T\log (\log (T))}\right).$$

This result mirrors several bounds for stochastic bandits \cite{lattimore2020bandit}. Interestingly, while continuous bandits with an action space in the interval $[0,1]^2$ suffer a regret of $\widetilde{\mathcal{O}}(T^{3/4})$ under Lipschitz reward assumptions \cite{kleinberg2004nearly}, or $\widetilde{\mathcal{O}}(\sqrt T)$ under much stronger assumptions \cite{bubeck2011x,liu2021smooth}, our algorithm achieves $\widetilde{\mathcal{O}}(\sqrt{T})$ regret without requiring any smoothness assumptions on the cumulative distribution function.

\paragraph{Computational complexity}
In the domain of market making, latency is a critical bottleneck for any algorithmic framework. The high-frequency nature of modern markets necessitates near-instantaneous decision-making, as liquidity opportunities often vanish within a few milliseconds. Consequently, there is significant interest in designing algorithms that simultaneously provide robust no-regret guarantees and maintain low per-step computational complexity. Our proposed \cref{alg:OPSR} involves updating the estimates $\widehat F_t(x)$ for all $x \in \As_t$, followed by extracting the maxima $\Gamma_{\text{bid}}^*$ and $\Gamma_{\text{ask}}^*$. These operations entail a per-step complexity of $\Os(|\As|) = \Os(\sqrt{T})$. Notably, despite the bivariate nature of the optimization problem, we avoid $\Os(|\As|^2)$ complexity due to the additive separability of the reward function with respect to bid and ask prices.

To improve computational efficiency without affecting the regret bound, we introduce the \textsc{lazyOPSR} algorithm (\Cref{alg:lazyOPSR}, detailed in the Appendix). This variant stores $V_t$ and updates $\widehat F_t$ only at time steps $t$ that are powers of two. For most iterations, the per-step complexity is $\Os(1)$. In the remaining $\Os(\log T)$ iterations, $\widehat F_t$ is reconstructed from all past data in $\Os(T + |\As|) = \Os(T)$ using prefix sums. Thus, the amortized per-step complexity is $\Os(\log T)$.

\section{Stochastic Prices with Temporal Correlation}\label{sec:gene}

In the former section, we covered the stochastic independent case. The proof of the regret bound relies on the definition of a failure event and ensures small regret whenever that event is not verified. By its definition (see \Cref{eq:failure}), whenever one can ensure that the event
\begin{equation}\bigcup_{t=1}^T\left\{|\widehat \mu_t-\mu|> \phi(t,\delta)\right\}\qquad \phi(t,\delta)\approx\sqrt{\frac{\log(t/\delta)}{t}}\label{eq:discrep}\end{equation}
does not happen, \Cref{thm:regstoc} ensures small (pseudo) regret. Assuming that $M_t$ forms an independent sequence is a good way to say that the event in \Cref{eq:discrep} has low probability, but there do exist many other processes with this quality. For example, let
\begin{equation}
    M_1=\text{Unif}(0,1),\qquad M_{t+1}=1-M_t.
    \label{eq:examr}
\end{equation}
This sequence is not independent. Nonetheless, their sample mean $\widehat \mu_t$ is always either $1/2$, for $t=2n$ or $\frac{M_1+n}{2n+1}$, for $t=2n+1$. In both cases, $|\widehat \mu_t-\mu|\le 1/t$, a rate that is even faster than the i.i.d. case.
Stochastic processes of this nature, characterized by a negative correlation between the current value and its preceding deviations from the long-term equilibrium, informally defined as mean-reverting processes. These models are of paramount importance in the field of quantitative finance, as a vast array of algorithmic trading strategies is predicated on mean-reverting assumptions about asset price dynamics. 
We formalize this phenomenon with two assumptions: one based on the Ornstein-Uhlenbeck process, capturing short-term mean reversion, and another with a martingale viewpoint, capturing mean reversion relative to the process’s entire history.
\begin{ass}[Local mean reversion]\label{ass:ou_mar}
    For some arbitrary initial conditions $\{M_t\}_{t=-\tau}^0\subset [0,1]$, the market price evolves according to the following $AR(k)$ process:
    $$\forall t\in \mathbb N\qquad M_{t+1}=\sum_{\tau=0}^{k-1}\gamma_\tau M_{t-\tau}+(1-\gamma)\eta_{t+1},$$
    where $\eta_{t+1}$ is a random variable with support in $[0,1]$ and such that $\E[\eta_{t+1}|\Fs_t]=\mu$. $\{\gamma_\tau\}_{\tau=0}^{k-1}$ are non-negative real numbers such that $\sum_{\tau=0}^{k-1}\gamma_\tau=\gamma <1$.
\end{ass}
For $k=1$, the former assumption covers the discrete Ornstein-Uhlenbeck process \citep{grimmett2020probability}, which is often used to model mean reversion. In that case, \cref{ass:ou_mar} may be written as $\Delta M_{t+1}=(1-\gamma)(\eta_{t+1} -M_t)$, with $\E[\eta_{t+1}|\Fs_t]=\mu$.

\begin{ass}[Global mean reversion]\label{ass:mr_mar}
    There is some constant $\mu$ such that, given $X_t:=M_t-\mu$ and $S_t=\sum_{\tau=1}^tX_\tau,$ one has, for every $t\le T$
    $$\E[X_{t+1}|\Fs_t]\cdot S_t\le 0.$$
\end{ass}
Assumptions \ref{ass:ou_mar} and \ref{ass:mr_mar} represent two possible formalizations of the mean-reverting mechanism; indeed, analogous definitions have been established in the literature (see, e.g., \citep{vasicek1977equilibrium, bouchaud2003theory}). 
While the scope of the two assumptions differs slightly, they both capture interesting scenarios.
If the market prices are independent as in the previous section, \Cref{ass:ou_mar} holds for $k=1,\gamma=0$ and \Cref{ass:mr_mar} is satisfied with $\mu=\E[M_1]$ as $\E[X_{t+1}|\Fs_t]=0$. On the other side, only \Cref{ass:mr_mar} covers the sequence of random variables in \Cref{eq:examr}.
In both cases, we can prove concentration inequalities that allow to generalize the independent case.
\begin{restatable}{thm}{martingaleMR}\label{thm:martingaleMR}
    Under Assumption \ref{ass:bounded} and one between Assumptions \ref{ass:ou_mar} and \ref{ass:mr_mar}.\\ The event $\bigcup_{t=1}^T\left\{|\widehat \mu_t-\mu|> \overline\phi(t,\delta)\right\}$ has a probability not larger then $\delta$, where the specific form of $\overline \phi$ depends on the assumption.
    \begin{enumerate}
        \item Under \Cref{ass:ou_mar},
        $\overline\phi(t,\delta):= \sqrt{\frac{\log(2T/\delta)}{4t}}+\frac{\gamma (k+1)}{(1-\gamma)t}.$
        \item Under \Cref{ass:mr_mar},        $\overline\phi(t,\delta):=\sqrt{\frac{4\log(2T/\delta)}{t}}.$
    \end{enumerate}
\end{restatable}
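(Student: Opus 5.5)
The two parts need different arguments. For \Cref{ass:ou_mar} I would show that $S_t=\sum_{s\le t}(M_s-\mu)$ equals a martingale plus a bounded boundary term. For \Cref{ass:mr_mar} I would build an exponential supermartingale from $\cosh(\lambda S_t)$. In both cases I would finish with a union bound over $t\in[T]$, allocating failure probability $\delta/T$ to each time step.

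\emph{Local mean reversion.} Set $X_t:=M_t-\mu$ and $\xi_{t}:=\eta_{t}-\mu$. Since $\sum_\tau\gamma_\tau=\gamma$, the recursion rewrites as
$$X_{t+1}=\sum_{\tau=0}^{k-1}\gamma_\tau X_{t-\tau}+(1-\gamma)\xi_{t+1}.$$
Summing over $s=1,\dots,t$ gives
$$S_t=\sum_{\tau=0}^{k-1}\gamma_\tau\sum_{s=1}^{t}X_{s-1-\tau}+(1-\gamma)\sum_{s=1}^t\xi_s .$$
Each inner sum $\sum_{s=1}^{t}X_{s-1-\tau}$ equals $S_t$ up to $\Os(\tau+1)$ boundary terms: the initial conditions with indices $\le 0$ are added, and the last $\tau+1$ values of $X$ are dropped. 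Each boundary term is bounded by $1$ in absolute value by \Cref{ass:bounded}. Hence $(1-\gamma)S_t=(1-\gamma)\sum_s\xi_s+\mathrm{err}_t$ with $|\mathrm{err}_t|\lesssim \gamma(k+1)$, which gives
$$|\widehat\mu_t-\mu|=\frac{|S_t|}{t}\le \frac{|\sum_{s\le t}\xi_s|}{t}+\frac{\gamma(k+1)}{(1-\gamma)t}.$$
I would bookkeep the boundary terms carefully to get the constant $k+1$ stated in \Cref{thm:martingaleMR}. The variables $\xi_s$ are martingale differences with range in an interval of length $1$. Azuma–Hoeffding at each fixed $t$ with level $\delta/T$, followed by a union bound, then yields the $\sqrt{\log(2T/\delta)/t}$-type term.

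\emph{Global mean reversion.} Let $m_{t+1}:=\E[X_{t+1}|\Fs_t]$, so that $m_{t+1}S_t\le 0$. Conditionally on $\Fs_t$, $X_{t+1}$ takes values in an interval of length $1$. Hoeffding's lemma therefore gives $\E[e^{\pm\lambda X_{t+1}}|\Fs_t]\le e^{\pm\lambda m_{t+1}+\lambda^2/8}$. Averaging the two signs,
$$\E[\cosh(\lambda S_{t+1})|\Fs_t]\le e^{\lambda^2/8}\cosh(\lambda(S_t+m_{t+1})).$$
The key step, and the one I expect to be the main obstacle, is replacing $\cosh(\lambda(S_t+m_{t+1}))$ by $\cosh(\lambda S_t)$ at small multiplicative cost. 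Because $m_{t+1}$ has the opposite sign to $S_t$ and $|m_{t+1}|\le 1$, we have $|S_t+m_{t+1}|\le|S_t|$ except possibly when $|S_t|<|m_{t+1}|/2\le 1/2$. In that exceptional case $|S_t+m_{t+1}|\le 1$, so $\cosh(\lambda(S_t+m_{t+1}))\le\cosh\lambda\le e^{\lambda^2/2}\cosh(\lambda S_t)$. Consequently
$$Y_t:=\cosh(\lambda S_t)\,e^{-5\lambda^2t/8}$$
is a supermartingale with $Y_0=1$.

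From the supermartingale property, Markov's inequality at fixed $t$ gives
$$\Prob(|S_t|\ge x)\le 2e^{5\lambda^2t/8-\lambda x}.$$
Choosing $\lambda=4x/(5t)$ yields $2e^{-2x^2/(5t)}$. Setting this equal to $\delta/T$ gives $|S_t|/t\le\sqrt{5\log(2T/\delta)/(2t)}\le\sqrt{4\log(2T/\delta)/t}$ at each $t$. A union bound over $t\le T$ then concludes the proof. The only delicate points are this exceptional small-$|S_t|$ case and checking that the constants fit within the stated $\overline\phi$.
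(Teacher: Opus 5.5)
Your proposal is correct, apart from one constant in part~1 that cannot be attained by any argument (see the second paragraph). In both parts it argues differently from the paper.

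For Assumption~\ref{ass:ou_mar}, the paper splits $M_t=\overline M_t+\widetilde M_t$ into a deterministic part, carrying $\mu$ and the initial conditions, and a noise part with zero initial conditions. One induction shows $\widetilde S_t$ is $\sqrt t/2$-sub-Gaussian; a second shows $|\overline S_t-t\mu|\le\gamma(k+1)/(1-\gamma)$. Your exact identity is $(1-\gamma)S_t=(1-\gamma)\sum_{s\le t}\xi_s+\sum_\tau\gamma_\tau b_\tau(t)$, with $b_\tau(t):=\sum_{s=1}^{t}X_{s-1-\tau}-S_t$. It replaces both inductions with plain Azuma--Hoeffding on the innovation martingale. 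To finish the bookkeeping you postpone, pair each of the $\tau+1$ added initial values with one of the $\tau+1$ dropped final values. All of them lie in $[-\mu,1-\mu]$, so each pair differs by at most $1$. Hence $|b_\tau(t)|\le\tau+1\le k$ and $|\mathrm{err}_t|\le\gamma k\le\gamma(k+1)$. Bounding the $2(\tau+1)$ terms separately by $1$ only gives $2\gamma k$, which is too weak once $k\ge 2$.

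The caveat concerns the stochastic term. Azuma--Hoeffding at level $\delta/T$ gives $\sqrt{\log(2T/\delta)/(2t)}$, not the stated $\sqrt{\log(2T/\delta)/(4t)}$. You should say so explicitly instead of writing ``$\sqrt{\log(2T/\delta)/t}$-type''. With natural logarithms, the $1/4$ is actually false. I.i.d.\ Bernoulli$(1/2)$ prices with $\gamma=0$ satisfy Assumption~\ref{ass:ou_mar}. For $T=1$ they give $|\widehat\mu_1-\mu|=1/2>\overline\phi(1,\delta)$ for every $\delta\in(2/e,1)$. For small $\delta$, e.g.\ $T=100$ and $\delta=10^{-10}$, the binomial tail at $t=100$ alone is roughly $5\cdot 10^{-8}>\delta$. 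The paper reaches $1/4$ only by dropping the factor $2$ in $\sqrt{2\sigma^2\log(2T/\delta)}$ when it inverts its $\sqrt t/2$-sub-Gaussian bound. So what you prove, and what is true, is part~1 with $1/2$ in place of $1/4$.

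For Assumption~\ref{ass:mr_mar}, the paper (\Cref{thm:newmart}) works with the one-sided transform $e^{\lambda S_n}$. It splits on the sign of $S_i$ to get $\E[e^{\lambda S_{i+1}}|\Fs_i]\le e^{\lambda^2\sigma^2/2}\max\{e^{\lambda S_i},e^{\lambda}\}$, replaces the max by a sum, and unrolls. This carries an additive error through $n$ steps. That error has to be controlled through the choice of $\lambda$ and a side condition ($n\sigma^2>2$), and the other tail needs a separate sign flip. Your $\cosh(\lambda S_t)$ argument rests on a closely related observation: the drift can push $|S|$ outward only when $|S_t|<|m_{t+1}|/2\le 1/2$. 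You absorb that case multiplicatively at cost $e^{\lambda^2/2}$. This gives a genuine supermartingale that treats both tails at once, needs no side condition, and yields the better constant $5/2$ instead of $4$. The case analysis, $\cosh\lambda\le e^{\lambda^2/2}$, the choice $\lambda=4x/(5t)$, and the final union bound are all correct.
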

In the second case, the proof follows by a novel martingale argument of independent interest, \Cref{thm:newmart}. To our knowledge, this is the first concentration result that exploits global mean-reversion expressed through cumulative deviations rather than mixing or spectral assumptions. The formed value of $\overline \phi$ is worse than $\phi$ that we used in \cref{sec:stoc}, by only logarithmic factors ($\log \log T$ becomes $\log T$).
After \Cref{thm:martingaleMR}, nothing more needs to be proved to show the bound on the pseudo-regret. Indeed, 
\Cref{thm:regstoc}, which does not assume any specific form for $\phi$, can be applied for the two values of $\phi\gets \overline \phi$ in \Cref{thm:martingaleMR} giving, under $E^c$,
\begin{align*}
    \Rs_T&\le\sqrt T+2 + 4\sum_{t=1}^{T-1} \psi(t,\delta)+\overline\phi(t,\delta)\\
    &\le \begin{cases}
        \sqrt{36\cdot T\log(3T/\delta)}+\frac{\gamma (k+1)\log(T)}{(1-\gamma)}+\Os(\sqrt T)&\text{under \Cref{ass:ou_mar}}\\
        &\\
        \sqrt{526\cdot T\log(3T/\delta)}+\Os(\sqrt T)\qquad &\text{under \Cref{ass:mr_mar}}
    \end{cases}
\end{align*}
At this point, \Cref{thm:martingaleMR} shows that $\Prob(E)<2\delta$, so the former result is a valid high-probability pseudo-regret bound. Interestingly, with respect to the analogous result in the stochastic case, there is only some constant change, not the regret order, even when comparing the logarithmic terms.

To pass from the former result to an upper bound for the regret $R_T$ is not trivial. In fact, \Cref{prop:stoch_reg}, which bounds the discrepancy between $\Rs_T$ and $R_T$ is only valid in the stochastic setting. 
For this reason, we have to slightly modify \Cref{alg:OPSR}, by performing action eliminations only at times-steps $t$ such that $t=2^p$ for some $p\in \mathbb N$. After employing this technique, sometimes called the \textit{doubling trick} \citep{besson2018doubling}, we refer to the algorithm as \textsc{LazyOPSR}. For a detailed implementation of this algorithm, see the Appendix \ref{app:lazy_algo}.

\begin{restatable}{thm}{regretmr}\label{thm:regretmr}
    Under Assumptions \ref{ass:bounded}, \ref{ass:stoch_val}, \ref{ass:order} and one between \ref{ass:ou_mar} and \ref{ass:mr_mar}, the regret suffered by \textsc{LazyOPSR} is bounded, with probability at least $1-\delta$, by
    $$R_T\le \begin{cases}
        \sqrt{2007\cdot T\log(6T/\delta)}+\frac{8\gamma (k+1)\log(T)}{(1-\gamma)}+\Os(\sqrt T)&\text{under \Cref{ass:ou_mar}}\\
        &\\
        \sqrt{20783\cdot T\log(6T/\delta)}+\Os(\sqrt T)\qquad &\text{under \Cref{ass:mr_mar}}
    \end{cases}$$
\end{restatable}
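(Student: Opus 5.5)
The proof reduces $R_T$ to the pseudo-regret measured against the running mean of the prices, exploiting the fact that \textsc{LazyOPSR} only changes its quotes at the times $t=2^p$. Write $J^{m}(b,a):=F(b)(m-b)+S(a)(a-m)$. Because $J_t$ is affine in $M_t$, we have $\sum_{t=1}^T J_t(b,a)=T\,J^{\widehat\mu_T}(b,a)$ for every fixed pair $(b,a)$.

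\textbf{Step 1: split the regret into blocks.} On each block $\mathcal B_p=\{2^{p},\dots,2^{p+1}-1\}$ the played pair $(B^{(p)},A^{(p)})$ is constant, so
\begin{equation*}
\sum_{t\in\mathcal B_p} J_t(B^{(p)},A^{(p)})=|\mathcal B_p|\,J^{\bar\mu_p}(B^{(p)},A^{(p)}),
\end{equation*}
where $\bar\mu_p$ is the average of $M_t$ over the block. Then
\begin{equation*}
R_T=\sup_{b,a}\, T J^{\widehat\mu_T}(b,a)-\sum_p|\mathcal B_p|J^{\bar\mu_p}(B^{(p)},A^{(p)}).
\end{equation*}
Since $|J^m-J^{m'}|\le |m-m'|\,(F(b)+S(a))\le 2|m-m'|$, we may replace each $\bar\mu_p$ and $\widehat\mu_T$ by $\mu$. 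The resulting error is at most $2T|\widehat\mu_T-\mu|+2\sum_p|\mathcal B_p||\bar\mu_p-\mu|$.

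\textbf{Step 2: control the block means.} Write $|\mathcal B_p|\bar\mu_p=2^{p+1}\widehat\mu_{2^{p+1}-1}-2^p\widehat\mu_{2^p-1}$ up to boundary terms. It follows that $|\mathcal B_p||\bar\mu_p-\mu|\le 2^{p+1}|\widehat\mu_{2^{p+1}}-\mu|+2^{p}|\widehat\mu_{2^p}-\mu|+O(1)$. On the good event of \Cref{thm:martingaleMR}, which we intersect with the event of \Cref{prop:ecdf} at confidence $\delta/2$ each (this produces the $\log(6T/\delta)$), every term is at most $2^{p+1}\overline\phi(2^{p+1},\delta)$.
\begin{itemize}
\item The square-root part sums geometrically over the $\log T$ blocks to $O(\sqrt{T\log(T/\delta)})$.
\item The $\gamma(k+1)/((1-\gamma)t)$ part of $\overline\phi$ under \Cref{ass:ou_mar} contributes a constant per block. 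This yields the $\frac{8\gamma(k+1)\log T}{1-\gamma}$ term.
\end{itemize}
The same argument bounds $T|\widehat\mu_T-\mu|$.

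\textbf{Step 3: bound the pseudo-regret of the lazy algorithm.} We need $\sum_p|\mathcal B_p|(\sup J-J(B^{(p)},A^{(p)}))$ under $E^c$, with $\phi\gets\overline\phi$. This reuses the analysis behind \Cref{thm:regstoc}, evaluated only at the update times $t=2^p$. The optimistic/pessimistic elimination never removes the optimal quantized pair, and the extreme surviving quotes have gap $O(\psi(2^p,\delta)+\overline\phi(2^p,\delta))+O(1/\sqrt T)$, the last term being the discretization of $\As$. Multiplying by $|\mathcal B_p|=2^p$ and summing gives $O(\sqrt{T\log(T/\delta)})$, with constants inflated by a factor of about $2$ to $4$ because each block uses estimates from its start. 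This accounts for the large numerical constants in the statement.

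The main obstacle is Step 3. \Cref{thm:regstoc} is stated for the non-lazy algorithm, so I must verify that its per-step gap lemma holds when elimination uses the estimates at $2^p$ and play continues with stale quotes. I will first re-derive that gap bound as a function of the current confidence widths alone, which then applies verbatim at $t=2^p$. A secondary point needs care: the ecdf concentration of \Cref{prop:ecdf} only requires $V_t$ i.i.d. and independent of the elimination schedule, so it is unaffected by the correlation among the $M_t$.
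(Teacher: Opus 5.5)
Your proposal is correct and follows the paper's proof essentially step by step. The doubling schedule splits the horizon into constant-play blocks, the block-average price is expressed through the running means at $2^p$ and $2^{p+1}$ and controlled via \Cref{thm:martingaleMR} (the paper's \eqref{eq:lampo_rosso}), and the per-update gap bounds \eqref{eq:boundbid}--\eqref{eq:boundask} at $t=2^p$ (the paper's \eqref{eq:lampo_blu}) are multiplied by $2^p$ and summed geometrically. The differences are cosmetic: you keep a single global comparator and pay $2T|\widehat\mu_T-\mu|$ once, where the paper bounds the supremum of the sum by a sum of per-block suprema, and you explicitly check that \Cref{lem:survival} and the gap bound use only $E^c$ (with $\phi\gets\overline\phi$), a step the paper takes without comment.
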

The motivation for employing a scheme such as \textsc{LazyOPSR} in this setting stems from the partially adversarial nature of the environment. Given that \Cref{ass:mr_mar} is considerably weaker than the standard i.i.d. assumption on the sequence $\{M_t\}_{t=1}^T$, it is prudent to restrict the algorithm to a limited number of action switches, specifically $\Os(\log T)$ in our case. This "lazy" update schedule serves as a regularization mechanism, preventing the agent's choice from correlating with $M_t$ in an unpredictable way. Despite this doubling mechanism, which slows down learning, the regret is only a constant worse than the stochastic case (note $\sqrt{20783}\approx 144$ while $\sqrt{48}\approx 7$).

\section{Adversarial Market Values}\label{sec:adv}

After discussing the stochastic case and its generalization, in this section, we focus on the case where no assumption is put on $M_t$, which is allowed to be any sequence of values in $[0,1]$. This regime introduces significantly greater challenges: under such agnostic conditions, the historical realization of the sequence $\{M_t\}_{t \geq 1}$ provides no predictive information regarding its future evolution. Consequently, the empirical mean price $\widehat \mu_t$, which exhibited relative stationarity in the stochastic setting, may now undergo substantial fluctuations throughout the learning process. This volatility necessitates continuous exploration across the entire bid-ask action space, precluding any strategy aimed at a monotonic narrowing of the spread.

\begin{algorithm2e}[t]
\everypar={\nl}
\RestyleAlgo{ruled}
\LinesNumbered
\caption{Explore Then Perturb (ETP)}\label{alg:ETP}
\DontPrintSemicolon
\KwData{Time horizon $T$, number of exploratory rounds $\kappa$.}
$B_0\gets0,A_0\gets1$\\
\For{$t = 1,\dots \kappa$}{
   Play $(B_{t-1}, A_{t-1})$
   $B_t\gets0,A_t\gets1$
}
Compute $\widehat F$ with \Cref{eq:ecdf}

$\widehat S(\cdot)\gets 1-\widehat F(\cdot)$\label{algline:endetc}

$\widehat \mu_t\gets0$

Sample $\varepsilon\sim\text{Unif}(0,T^{-1/2})$\label{algline:noise}

\For{$t = \kappa+1,\dots T$}{
    Play $(B_{t-1}, A_{t-1})$

    $\widehat \mu_t\gets\frac{(t-\kappa-1)\widehat \mu_{t-1}+M_t}{t-\kappa}$

    $B_t\gets \text{argmax}_{b\in \As} \widehat F(b)(\widehat \mu_t+\varepsilon-b)$

    $A_t\gets \text{argmax}_{a\in \As} \widehat S(a)( a+\varepsilon-\widehat \mu_t)$
}
\end{algorithm2e}

Our strategy, \Cref{alg:ETP}, is to split the two problems, the first being the partial feedback, which allows to estimate $\widehat F(x)$ only for $x\in [B_t,A_t]$, and the other being that $\widehat \mu_t$ is not guaranteed to converge to any specific value. To begin with, \Cref{alg:ETP} utilizes the first $\kappa$ iterations with the environment to estimate the function $\widehat F(x)$ across the entire interval, maintaining $B_t=0$ and $A_t=1$ during this phase. Selecting bid and ask prices at the boundaries of the interval causes the regret to grow rapidly; however, it is the only way to ensure a uniformly good estimate of $F$. This initial phase, which continues until line \ref{algline:endetc}, is essentially based on the principles of the \textsc{Explore-then-Commit} algorithm \citep{garivier2016explore}.
The subsequent part of the algorithm no longer attempts to improve the estimate $\widehat F$; instead, it focuses entirely on minimizing the regret. The framework we draw inspiration from is the \textsc{Follow-the-Perturbed-Leader} algorithm \citep{kalai2005efficient}, the core idea of which is to inject random noise to perturb the cumulative reward function, thereby stabilizing the optimizer's decisions. This is implemented in line \cref{algline:noise}, where the noise $\varepsilon$ is sampled from the distribution $\text{Unif}(0,T^{-1/2})$. The utility of this seemingly counterintuitive choice lies in the need to prevent the sequence of chosen $\{B_t, A_t\}$ from being excessively volatile. In non-convex contexts such as ours, a minimal variation in the cumulative reward function (which, in our case, is essentially $\widehat F(b)(\widehat \mu_t-b)+\widehat S(a)( a-\widehat \mu_t)$) can cause abrupt shifts in its maximizer. The inclusion of noise allows us to prove that, on average, this instability remains bounded for any sequence of market prices $\{M_t\}_{t}$ that cannot adapt to $\varepsilon$.

Interestingly, the noise applied here differs from the usual implementations of FTPL. Here, instead of adding the noise to $\sum_{\tau=1}^t r_\tau(b,a)$, where $r_\tau$ is the reward relative to any pair $b,a$ at step $t$, the noise is added to $\widehat \mu_t$. This choice is suited to the particular structure of our problem, and we do not know if a more standard implementation of FTPL could achieve a similar regret bound.
\begin{restatable}{thm}{regretadv}\label{thm:regradv}
    Under Assumptions \ref{ass:bounded}, \ref{ass:stoch_val} and \ref{ass:order} and let $\{M_t\}_{t=1}^T$ be any unkown oblivious sequence. For $\kappa=\left \lceil \log(3T)^{\frac{1}{3}}T^{\frac{2}{3}}\right \rceil$, the regret suffered by \Cref{alg:ETP} is bounded, in expectation, by
    $$\E[R_T]\le 5\log(T)^{\frac{1}{3}}T^{\frac{2}{3}}+\Os\left(\sqrt T \log(T)\right).$$
\end{restatable}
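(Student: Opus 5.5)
We split the regret of \Cref{alg:ETP} into the exploration phase and the perturbed-leader phase. In the first $\kappa$ rounds each instantaneous regret is at most a constant, since $J_t\in[-1,1]$. We therefore pay at most $c\kappa$ there. Playing $(0,1)$ gives reward $F(0)(M_t)+S(1)(1-M_t)\ge 0$ and the benchmark is at most $1$, so $c=1$ suffices. At the end of exploration every $V_\tau$ is observed, because $(0,1)$ reveals $V_t$ whenever $0<V_t<1$ and otherwise the clipping identifies the indicator. By the DKW inequality, or \Cref{prop:ecdf} with $\As_\tau=[0,1]$, with probability at least $1-1/T$ we have $\sup_x|\widehat F(x)-F(x)|\le \sqrt{\log(3T)/\kappa}=:\psi$. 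On the complementary event we pay at most $T\cdot(1/T)=1$ in expectation. For $t>\kappa$ we use the surrogate $\widehat J_m(b,a):=\widehat F(b)(m-b)+\widehat S(a)(a-m)$, which is affine in $m$ and satisfies $|\widehat J_m-J_m|\le 2\psi$ uniformly, where $J_m$ is the analogue of \eqref{eq:obj} with $M_t=m$.

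Next we reduce the second phase to a follow-the-leader analysis on the affine family. Write $G_t(b,a):=\sum_{\tau=\kappa+1}^t J_{M_\tau}(b,a)$, and note that $\frac1{T}\sum_t J_t$ depends on the data only through $\widehat\mu_T$ by \eqref{eq:obj2}. The comparator's value over phase two is $\sup_{b,a} \widehat J_{\bar\mu}(b,a)\cdot(T-\kappa)$ up to $2\psi T$, where $\bar\mu$ is the mean of $M_t$ over phase two; the first-phase contribution of the comparator was already charged. The action at time $t+1$ is the leader $x_t^\varepsilon:=\arg\max_{\As} \widehat J_{\widehat\mu_t+\varepsilon}$, which leads on $\widehat J$-cumulative up to step $t$ with the perturbation. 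The discretisation to $\As$ costs $\Os(\sqrt T)$, since moving $b$ by $T^{-1/2}$ changes $F(b)(m-b)$ by at most $T^{-1/2}$ in the favourable direction: the Lipschitz issue only arises in $F$, and one always rounds $b$ up to the grid, which only increases $F$. We then apply the standard be-the-leader lemma. The regret of the perturbed leader that plays $x_t^\varepsilon$ at time $t$, rather than $t+1$, is at most the perturbation size, $\Os(T\cdot T^{-1/2})=\Os(\sqrt T)$, because adding $\varepsilon$ to $m$ shifts $\widehat J_m$ by at most $|\varepsilon|\le T^{-1/2}$. So it remains to bound the stability term $\sum_t \E_\varepsilon[\widehat J_{M_{t+1}}(x_t^\varepsilon)-\widehat J_{M_{t+1}}(x_{t+1}^\varepsilon)]$, which is at most $2\sum_t\Prob_\varepsilon(x_t^\varepsilon\ne x_{t+1}^\varepsilon)$.

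The main obstacle is this stability bound. The key structural fact is monotonicity. By separability, the bid maximiser $b^*(m)=\arg\max_b \widehat F(b)(m-b)$ is non-decreasing in $m$, because the objective has increasing differences in $(b,m)$ as $\widehat F$ is non-decreasing. Likewise $a^*(m)$ is non-decreasing. Since $b^*$ and $a^*$ take values in the grid $\As$ of size $\sqrt T+1$, each has at most $\sqrt T$ jump points $m_1<\dots<m_K$ in $[0,1]$. The leader changes between $t$ and $t+1$ only if some jump point lies between $\widehat\mu_t+\varepsilon$ and $\widehat\mu_{t+1}+\varepsilon$, an interval of length $|\widehat\mu_{t+1}-\widehat\mu_t|\le 1/(t-\kappa)$. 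Because $\varepsilon$ is uniform on an interval of length $T^{-1/2}$ and the sequence is oblivious, for each jump point the probability is at most $\frac{1/(t-\kappa)}{T^{-1/2}}$. A union bound over $\sqrt T$ points is too crude, since it gives $T/(t-\kappa)$. So we would instead count, per round, the number of jump points within distance $T^{-1/2}$ of $\widehat\mu_t$, and sum over rounds using that each jump point is ``hit'' only when the moving window covers it. Failing a sharp count, we would fall back on a smaller noise scale or restart the argument with $\varepsilon\sim\mathrm{Unif}(0,\sigma)$ and optimise $\sigma$, expecting a total stability cost of order $\Os(\sqrt T\log T)$ from $\sum_t 1/(t-\kappa)$.

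Combining the pieces, the expected regret is at most $\kappa+4\psi T+\Os(\sqrt T\log T)$. With $\psi=\sqrt{\log(3T)/\kappa}$ and $\kappa=\lceil\log(3T)^{1/3}T^{2/3}\rceil$, the first two terms are $\log(3T)^{1/3}T^{2/3}(1+4)$, which gives the stated $5\log(T)^{1/3}T^{2/3}+\Os(\sqrt T\log T)$.
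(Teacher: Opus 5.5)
Your overall architecture matches the paper's. You pay $\Os(\kappa)$ for exploring with $(0,1)$, and use the uniform bound on $\widehat F$ to pass to the surrogate at cost $\Os(T\sqrt{\log T/\kappa})$. You then use that the surrogate is affine in $m$, so the perturbed leader is determined by $\widehat\mu_t+\varepsilon$, and split into a be-the-leader term of size $\Os(\sqrt T)$ plus a stability term. The gap is the stability term: it is the only non-routine step, you leave it open, and the route you sketch cannot close it.

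Bounding stability by $2\sum_t\Prob_\varepsilon(x_t^\varepsilon\neq x_{t+1}^\varepsilon)$ couples the two leaders through the same $\varepsilon$. Under that coupling, the flip probability at round $t$ is essentially $\sqrt T\,|\widehat\mu_{t+1}-\widehat\mu_t|$ times the number of breakpoints of $m\mapsto b^\ast(m)$ within distance $T^{-1/2}$ of $\widehat\mu_t$. Nothing in your argument bounds that number. It is governed by near-ties $\widehat F(b)(m-b)\approx\widehat F(b')(m-b')$ among grid bids, and with no smoothness on $F$ and $\widehat F$ an empirical step function, you have not shown it is $\Os(1)$. Your window-counting idea (a crossing-count bound using that the path of $\widehat\mu_t$ has total variation $\Os(\log T)$) gives at best $\Os(\sqrt T\log T)$ times the maximal number of breakpoints in a window of width $T^{-1/2}$. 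Re-tuning the noise scale $\sigma$ does not remove that factor. The identity coupling also throws away the fact that flips between nearly tied actions cost almost nothing.

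The missing idea is that the stability term depends only on the marginal laws of the two leaders, not on their joint law, so you never need the identity coupling. Because $\widehat F$ is independent of $\varepsilon$ and the prices are oblivious, there is a fixed map $g$ (argmax over $\As$ with a fixed tie-breaking rule) such that the bid after round $t$ is $g(\widehat\mu_t+\varepsilon)$. For any bounded $f$ not depending on $\varepsilon$, a change of variables gives
\[
\E\bigl[f(g(\widehat\mu_{t+1}+\varepsilon))\bigr]-\E\bigl[f(g(\widehat\mu_{t}+\varepsilon))\bigr]
=\sqrt T\Bigl(\int_{\widehat\mu_{t+1}}^{\widehat\mu_{t+1}+T^{-1/2}}-\int_{\widehat\mu_{t}}^{\widehat\mu_{t}+T^{-1/2}}\Bigr)f(g(y))\,dy .
\]
The two windows differ on two segments of length $|\widehat\mu_{t+1}-\widehat\mu_t|\le 1/(t-\kappa)$. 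So this is at most $2\|f\|_\infty\sqrt T/(t-\kappa)$, whatever $g$ looks like, and summing over $t$ gives $\Os(\sqrt T\log T)$. This is exactly how the paper bounds its stability term, and it needs neither the monotonicity of $b^\ast$ nor any count of jump points.

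Apply it to bid and ask separately. The algorithm effectively evaluates the ask at $\widehat\mu_t-\varepsilon$, so the played pair is not a single joint leader $\arg\max\widehat J_{\widehat\mu_t+\varepsilon}$; separability of the objective makes this harmless. With this replacement, the rest of your proposal goes through.
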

Theorem \ref{thm:regradv} shows that \textsc{ETP} is capable of doing a sublinear regret in the adversarial case. With respect to the previous results for the stochastic case, this performance guarantee has two drawbacks. First, the bound scales roughly with $T^{2/3}$, which is worse than $\sqrt T$. Second, the guarantees only hold in expectation, not with high probability. This fact is a consequence of the use of $\varepsilon$: for unfortunate choices, the regret distribution could in principle have heavy tails. 
Lastly, note that the regret bound holds for any sequence $\{M_t\}_{t}$ that is oblivious, meaning that its values must not depend on the sampled $\varepsilon$. An assumption of this form is often called \textit{oblivious adversary}, and is standard in the literature. Generalizing to a sequence that can adapt to the agent's actions may be even more difficult.

\section{Related Work}

Online market making has been extensively studied in financial economics and market microstructure, with classical models such as Glosten--Milgrom capturing the interaction between market makers and traders with private information \citep{das2005learning,touzo2021information}. \citet{cesa2024market} recently introduced a regret-minimization framework for market making under fully censored (bandit) feedback, where the learner only observes whether a trade occurred. They show that under such feedback, sublinear regret is achievable only under restrictive smoothness assumptions on the distribution of trader valuations, and that regret lower bounds are significantly worse in more general settings. Notably, the extension of their framework to richer feedback models is explicitly identified as an important direction for future work. A summary of the best-known regret guarantees for online market making under different feedback models and assumptions on the price process is reported in Table~\ref{tab:comparison}.

The feedback structure studied in this paper concerns bandit problems with partial or censored observations. In most existing work, however, the censoring mechanism is exogenous and independent of the learner’s action. Action-dependent feedback has been studied in other online learning settings, such as feedback graphs, in which selecting an action reveals losses for neighboring actions. In contrast, our model exhibits a continuous and asymmetric form of action-dependent feedback: the informativeness of the observation depends on the chosen bid--ask spread, and informative feedback is obtained precisely in rounds where the instantaneous reward is zero. To the best of our knowledge, this specific feedback structure has not been previously analyzed.

From a bandit perspective, our problem is a continuous-action bandit with a two-dimensional action space. Classical continuum-armed bandits typically need Lipschitz or smoothness assumptions on the reward to achieve $\tilde \Os(\sqrt{T})$ regret; weaker assumptions give slower rates \citep{kleinberg2004nearly,bubeck2011x,liu2021smooth}. In contrast, we obtain $\tilde \Os(\sqrt{T})$ regret in stochastic and mean-reverting settings without smoothness assumptions, instead leveraging richer, action-dependent feedback.
Mean-reverting price dynamics are central in quantitative finance and are commonly modeled by autoregressive or Ornstein--Uhlenbeck processes \citep{vasicek1977equilibrium,bouchaud2003theory}. We adopt a learning-oriented perspective, imposing both local autoregressive dynamics and a weaker global mean-reverting drift condition based on cumulative deviations from the mean. This abstraction relaxes the classical i.i.d.\ assumption while remaining compatible with standard mean-reverting models, and yields high-probability regret guarantees in settings not covered by existing online market-making analyses.


\begin{table}[t]
\caption{
Comparison of regret guarantees for online market making under different feedback models.
The mixed feedback model captures order-book observability when no trade occurs.
}
\label{tab:comparison}
\centering
\begin{tblr}{
  colspec = {l c c c},
  row{1} = {font=\bfseries},
  row{2-Z} = {rowsep=3pt},
  hlines,
}
Price dynamics        & Stochastic  & Mean-Reverting & Adversarial\\
Bandit feedback \citep{cesa2024market} & $T^{\frac{2}{3}}$ & N/A & $T^{\frac{2}{3},*}$ \\
Mixed (\Cref{ass:order}, This paper) & $T^{\frac{1}{2}}$ & $T^{\frac{1}{2}}$ & $T^{\frac{2}{3}}$ \\
\end{tblr}

\vspace{0.5em}
{\footnotesize $^*$ assumes that the  \label{tab:sota} c.d.f. in  \Cref{ass:stoch_val} is Lipschitz continuous}
\end{table}





\section{Conclusions}
Motivated by a key feature of real limit order books, private valuations are revealed precisely when no trade occurs, we design a novel action-dependent feedback structure for Online Learning. Under this feedback, we prove substantially improved regret guarantees over standard bandit assumptions. In the stochastic independent setting, OPSR \Cref{alg:OPSR} attains regret $\sqrt{T\log(T)}$ (\Cref{thm:regstoc}). A minor modification that slows the update (\textsc{LazyOPSR}) yields the same bound in a more general mean-reverting price setting. Finally, (ETP) \Cref{alg:ETP}, combining an explore-then-commit scheme with the FTPL approach, achieves regret $T^{\frac{2}{3}}\log(T)^{\frac{1}{3}}$ in the adversarial case. 

\paragraph{Future works} While this work focuses on a stylized market-making model, we view it as a foundational step toward understanding how information structure affects learnability in online market-making. In particular, our analysis isolates the role of action-dependent feedback by deliberately abstracting away additional sources of complexity. A natural next direction is to incorporate inventory constraints, which play a central role in practical market making. Extending the proposed framework to settings in which the learner must control bid and ask quotes while maintaining a balanced inventory would allow one to study the interaction between risk management and action-dependent feedback, and to quantify the additional learning cost induced by inventory control.
Another key extension concerns the symmetry of the trading mechanism. The current model summarizes interaction with the market through a single private valuation and a reference price, enabling a clean analysis of feedback. A more realistic approach would model buy and sell pressure separately, matching bids and asks to distinct order streams and rewarding the market maker through the realized spread.

\acks{We thank a bunch of people and funding agency.}



\bibliography{yourbibfile}

\appendix

\section{\textsc{LazyOPSR}}\label{app:lazy_algo}

In this section, we present \textsc{LazyOPSR}, which is a modification of our main algorithm \textsc{OPSR} (see \Cref{alg:OPSR}) with a lazy update and a doubling-trick schedule. The rest of the structure follows exactly the same logic, so it does not need any particular comment.

\begin{algorithm2e}[t]
\everypar={\nl}
\RestyleAlgo{ruled}
\LinesNumbered
\caption{\textsc{LazyOPSR}}\label{alg:lazyOPSR}
\DontPrintSemicolon
\KwData{Time horizon $T$.}
Set $B_0\gets 0,A_0\gets 1$\;

$p\gets 0$

\For{$t = 1,\dots T$}{

    Play $(B_{t-1}, A_{t-1})$\;

    Update \Cref{eq:bid_ci} and \Cref{eq:mu_ci}\;

    \If{$t = 2^p$}{
        \For{$x \in \As_{t-1}$}{
    
            $\Theta_\text{bid}(x)\gets (\widehat \mu_t^\text{low}-x)\widehat F_t^\text{up}(x)$\;
        
            $\Theta_\text{ask}(x)\gets (x-\widehat \mu_t^\text{up})\widehat S_t^\text{up}(x)$\;
    
            $\Gamma_\text{bid}(x)\gets (\widehat \mu_t^\text{low}-x)\widehat F_t^\text{low}(x)$
    
            $\Gamma_\text{ask}(x)\gets (x-\widehat \mu_t^\text{up})\widehat S_t^\text{low}(x)$
        }
    
        $\Gamma_\text{bid}^* \gets \max_{x\in \As_{t-1}}\Gamma_\text{bid}(x)$\;
        $\Gamma_\text{ask}^* \gets \max_{x\in \As_{t-1}}\Gamma_\text{ask}(x)$\;
    
        $B_t\gets \min\{a\in \As_{t-1}: \Theta_\text{bid}(a)\ge \Gamma_\text{bid}^*\}$\;
    
        $A_t\gets \max\{a\in \As_{t-1}: \Theta_\text{ask}(a)\ge \Gamma_\text{ask}^*\}$\;

        $p \gets p+1$
    }
}
\end{algorithm2e}

\section{Proofs from \Cref{sec:stoc}}\label{app:stoc}

We start proving \Cref{eq:hoef2}.
\begin{proof}
    $M_t$ is an independent process bounded in $[0,1]$ by \Cref{ass:bounded}. Therefore, Theorem 9.2 by \cite{lattimore2020bandit}, for any $\varepsilon>0$ ensures that, for any $n$,
    $$\Prob\left(\exists t\le n,\quad\sum_{\tau=1}^t M_\tau-t\mu \ge \varepsilon\right)\le \exp\left(-\frac{2\varepsilon^2}{n}\right).$$
    Fixing $\delta>0$ and calling $\varepsilon=\sqrt{n\log(1/\delta)/2}$ gives
    $$\Prob\left(\exists t\le n,\quad\sum_{\tau=1}^t M_\tau-t\mu \ge \sqrt{\frac{n\log(1/\delta)}{2}}\right)\le \delta.$$
    Making a union bound for $n=2,4,8...2^{\lceil \log(T)\rceil}$ gives
    $$\Prob\left(\exists t\le T,\quad\sum_{\tau=1}^t M_\tau-t\mu \ge \sqrt{\frac{2^{\lceil \log(t)\rceil}\log(1/\delta)}{2}}\right)\le \delta \lceil \log(T)\rceil.$$
    At this point, we note that $2^{\lceil \log(t)\rceil}\le 2t$, so the former writes as
    $$\Prob\left(\exists t\le T,\quad\sum_{\tau=1}^t M_\tau-t\mu \ge \sqrt{t\log(1/\delta)}\right)\le \delta \lceil \log(T)\rceil,$$
    which, by definition of the sample mean, corresponds to
    $$\Prob\left(\exists t\le T,\quad \widehat\mu_t-\mu \ge \sqrt{\frac{\log(1/\delta)}{t}}\right)\le \delta \lceil \log(T)\rceil.$$
    Flipping the sign proves that also
    $$\Prob\left(\exists t\le T,\quad \mu-\widehat\mu_t \ge \sqrt{\frac{\log(1/\delta)}{t}}\right)\le \delta \lceil \log(T)\rceil.$$
    and then $\delta \gets \delta/(2\lceil \log(T)\rceil)$ ends the proof. 
\end{proof}

\ecdf*
\begin{proof}
    Let us fix $x\in \bigcap_{\tau=1}^t \As_\tau$. Then, by \Cref{eq:ecdf}
    \begin{align*}
        \widehat F_t(x)&= \frac{\sum_{\tau=1}^t \mathbbm{1}(V_\tau\le x)}{t}\\
        &= \frac{\sum_{\tau=1}^t \mathbbm{1}(V_\tau^\text{clip}\le x)}{t}\\
        &=\frac{1}{t}\sum_{\tau=1}^t\chi_\tau(x).
    \end{align*}
    The second equality for the follows because $x\in \bigcap_{\tau=1}^t \As_\tau$, so
    \begin{enumerate}
        \item If $V_\tau<B_\tau$ then $V_\tau^\text{clip}=B_\tau$, so $V_\tau^\text{clip}\le x$ always holds.
        \item If $B_\tau\le V_\tau\le A_\tau$ then $V_\tau^\text{clip}=V_\tau$.
        \item If $V_\tau> A_\tau$ then $V_\tau^\text{clip}>A_\tau$ and $V_\tau^\text{clip}\le x$ never holds.
    \end{enumerate}
    At this point, by Assumptions \ref{ass:bounded} and \ref{ass:stoch_val}, $\widehat F_t(x)$ is the sample mean of a sequence of i.i.d. random variables bounded in $[0,1]$. Thus, Hoeffding's inequality ensures that, with probability at most $1-\delta$
    $$|\widehat F_t(x)-F(x)|\le \sqrt{\frac{\log(2/\delta)}{2t}}.$$
    To obtain a uniform bound, we use a union bound. Indeed, $\bigcap_{\tau=1}^t \As_\tau \subset \As$, with $|\As|\le \sqrt T+1$. Therefore, at the same time for every $t, x\in \bigcap_{\tau=1}^t \As_\tau$, with probability at least $1-\delta$, $$|\widehat F_t(x)-F(x)|\le \sqrt{\frac{\log(2T(\sqrt T+1)/\delta)}{2t}}\le \sqrt{\frac{3\log(3T/\delta)}{4t}},$$
    which completes the proof.
\end{proof}

\begin{prop}\label{prop:stoch_reg}
    Under Assumptions \ref{ass:bounded}, \ref{ass:stoch_val} and \ref{ass:stoch_mark}, for any choice $B_t,A_t$ as a $\Fs_{t-1}$ measurable sequence, with probability at least $1-\delta$
    $$\left |R_T-\Rs_T\right|\le \sqrt{8T\log(4/\delta)}.$$
\end{prop}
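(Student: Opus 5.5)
We compare $R_T=\sup_{b,a}\sum_t J_t(b,a)-\sum_t J_t(B_t,A_t)$ with $\Rs_T=T\sup_{b,a}J(b,a)-\sum_t J(B_t,A_t)$. The starting observation is that $J_t$ is affine in $M_t$ with the same structure as $J$:
\[
J_t(b,a)-J(b,a)=(M_t-\mu)\bigl(F(b)-S(a)\bigr).
\]
Consequently
\[
R_T-\Rs_T=\Bigl(\sup_{b,a}\sum_t J_t(b,a)-T\sup_{b,a}J(b,a)\Bigr)-\sum_{t=1}^T (M_t-\mu)\bigl(F(B_t)-S(A_t)\bigr).
\]
We bound the two terms separately, each with probability $1-\delta/2$.

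\textbf{Second term.} Since $B_t,A_t$ are $\Fs_{t-1}$-measurable and $M_t$ is independent of the past, the increments $D_t:=(M_t-\mu)(F(B_t)-S(A_t))$ form a martingale difference sequence with respect to $\Fs_t$. Each increment satisfies $|D_t|\le 1$, because $M_t\in[0,1]$ and $|F-S|\le 1$; more precisely, $D_t$ lies in an interval of length at most $1$. Azuma--Hoeffding then gives
\[
\Bigl|\sum_t D_t\Bigr|\le\sqrt{2T\log(4/\delta)}
\]
with probability at least $1-\delta/2$. We should check that the filtration includes $V_1,\dots,V_{t-1}$, so that the actions are predictable. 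Independence of $M_t$ from $V_\tau$ and from past actions is implicit in \Cref{ass:stoch_mark}.

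\textbf{First term.} By \Cref{eq:obj2}, $\sup_{b,a}\frac1T\sum_t J_t(b,a)=G(\widehat\mu_T)$ and $\sup_{b,a}J(b,a)=G(\mu)$, where
\[
G(m):=\sup_{b,a}\bigl[(m-b)F(b)+(a-m)S(a)\bigr].
\]
For each fixed $(b,a)$, the bracket is affine in $m$ with slope $F(b)-S(a)\in[-1,1]$. A supremum of $1$-Lipschitz functions is $1$-Lipschitz, so
\[
|G(\widehat\mu_T)-G(\mu)|\le|\widehat\mu_T-\mu|.
\]
This avoids any uniform-over-actions union bound, which is important because the action space is a continuum. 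Hoeffding's inequality then gives $T|\widehat\mu_T-\mu|\le\sqrt{T\log(4/\delta)/2}$ with probability at least $1-\delta/2$.

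\textbf{Combining.} A union bound over the two events gives, with probability at least $1-\delta$,
\[
|R_T-\Rs_T|\le \sqrt{2T\log(4/\delta)}+\sqrt{T\log(4/\delta)/2}\le\sqrt{8T\log(4/\delta)}.
\]

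The step I expect to be the main obstacle is the first term: handling the supremum over a continuum of actions without a covering argument. The Lipschitz-in-$m$ reduction above is what resolves it. The only other delicate point is verifying the martingale property, i.e., that the actions are measurable with respect to information independent of $M_t$.
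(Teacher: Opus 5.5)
Your proposal is correct and follows essentially the same route as the paper: split $R_T-\Rs_T$ into the difference of suprema, which is bounded by $T|\widehat\mu_T-\mu|$ via Hoeffding, and the martingale term $\sum_t (M_t-\mu)(F(B_t)-S(A_t))$, which is bounded via Azuma--Hoeffding, then take a union bound. Where the paper uses the cruder constant $2$, you use the slope $F(b)-S(a)\in[-1,1]$ to get Lipschitz constant $1$ and increments bounded by $1$, so your constants are cleaner and the final $\sqrt{8T\log(4/\delta)}$ holds with room to spare.
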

\begin{proof}
    By definition of regret and Equations \eqref{eq:regdef} and \eqref{eq:obj2},
    \begin{align*}
        R_T&=\sup_{b,a\in [0,1]}\sum_{t=1}^T J_t(b,a)-J_t(B_t,A_t)\\
        &=T\left[\sup_{b,a\in [0,1]}(\widehat \mu_T-b)F(b)+(a-\widehat \mu_T)S(a)\right]-\sum_{t=1}^TJ_t(B_t,A_t).
    \end{align*}
    To bound the discrepancy with $\Rs_t$, we are going to compare the two parts with the corresponding parts of \Cref{eq:pseudo}. First, by Hoeffding's inequality, with probability at least $1-\delta$,

    \begin{align*}
        &\left|T\left[\sup_{b,a\in [0,1]}(\widehat \mu_T-b)F(b)+(a-\widehat \mu_T)S(a)\right]-T\cdot \sup_{b,a\in [0,1]}J(b,a)\right|\\
        &\qquad \qquad \le T\left|\sup_{b,a\in [0,1]}(\widehat \mu_T-\mu)F(b)+(\mu-\widehat \mu_T)S(a)\right|\\
        &\qquad \qquad \le 2T|\widehat \mu_T-\mu|\le 2T\sqrt{\frac{\log(2/\delta)}{2T}}=\sqrt{2T\log(2/\delta)}.
    \end{align*}
    Let us focus on the second one.
    \begin{align*}
        \sum_{t=1}^TJ_t(B_t,A_t)-\sum_{t=1}^T J(B_t,A_t) &= \sum_{t=1}^TF(B_t)(M_t-\mu)+S(A_t)(\mu-M_t)
    \end{align*}
    Thanks to \Cref{ass:stoch_mark}, $M_t$ is sampled at each time step independently from the past. Therefore, as $B_t,A_t$ form a $\Fs_{t-1}$ measurable sequence, the previous sum is a martingale, whose increments are bounded by
    $$\max_{b,a,m\in [0,1]}|F(b)(m-\mu)+S(a)(\mu-m)|\le 2.$$
    Therefore, another application of the Azuma-Hoeffding's inequality gives, with probability at least $1-\delta$
    $$\sum_{t=1}^TJ_t(B_t,A_t)-\sum_{t=1}^T J(B_t,A_t)\le \sqrt{2T\log(2/\delta)}.$$

    The union of the former two events completes the proof.
\end{proof}

\regretstoch*
\begin{proof}    
    As our algorithm works on the discrete set $\As$, it is necessary to reduce the supremum over the continuous interval to a maximum over $\As$. Let $\tilde b,\tilde a$ such that
    $$J(\tilde b, \tilde a) \ge \sup_{b,a\in [0,1]}J(b,a)-T^{-1}.$$
    The existence of such a pair follows by definition of supremum. Now, let
    $$b^\star:=\min\{x\in \As: x\ge \tilde b\},\qquad a^\star:=\max\{x\in \As: x\le \tilde a\}.$$
    By definition of $\As$, $b^\star-\tilde b\le T^{-1/2}$ and $\tilde a-a^\star\le T^{-1/2}$. Then,
    \begin{align*}
        (\mu-b^\star)F(b^\star)+(a^\star-   \mu)S(a^\star) &\ge ( \mu-b^\star)F(\tilde b)+(a^\star-   \mu)S(\tilde a)\\
        &\ge ( \mu-\tilde b)F(\tilde b)+(\tilde a-   \mu)S(\tilde a)-2T^{-1/2},
    \end{align*}
    where the first inequality comes from the fact that $F(x)$ is non-decreasing and $S(x)$ is non-increasing, and the second by the bound on the differences $b^\star-\tilde b, \tilde a-a^\star$. This entails that
    
    $$T\cdot \sup_{b,a \in \As}J(b, a)\ge T\cdot J(b^\star, a^\star)\ge T\cdot \sup_{b,a\in [0,1]}J(b,a) - 2\sqrt T-1.$$

    This fact allows us to focus on the regret w.r.t. the optimum within $\As$. As $\As$ is a finite set, there exists
    $$b^\star,a^\star\in \argmax_{b,a\in \As} J(b,a).$$
    We still name them in this way, with a small overload of notation. Under our assumptions, \Cref{lem:survival} ensures that $b^\star,a^\star$ are in $\As_t$ at any step $t$. 
    The following inequality holds by design of the algorithm
    \begin{align}
        (A_{t+1}-\widehat \mu_t^\text{up})\widehat S_t^\text{up}(A_{t+1})&=\Theta_\text{ask,t}(A_{t+1})\nonumber\\
        &\ge \Gamma_\text{ask,t}(a^\star)\nonumber\\
        &=(a^\star-\widehat \mu_t^\text{up})\widehat S_t^\text{low}(a^\star)\label{eq:shrinkask}.
    \end{align}
    Where the inequality comes from the fact that $a^\star\in \As_t$.
    In a symmetric way, one also has
    \begin{equation}
        (\widehat \mu_t^\text{low}-B_{t+1})\widehat F_t^\text{up}(B_{t+1})\ge (\widehat \mu_t^\text{low}-b^\star)\widehat F_t^\text{low}(b^\star)\label{eq:shrinkbid}
    \end{equation}

    At the same time, by definition of $E$, one also has
    $$(A_{t+1}-\widehat \mu_t^\text{up})\widehat S_t^\text{up}(A_{t+1}) - (A_{t+1}-\mu)S(A_{t+1})\le \psi(t,\delta)+\phi(t,\delta)$$
    $$(a^\star-\mu)S(a^\star)-(a^\star-\widehat \mu_t^\text{up})\widehat S_t^\text{low}(a^\star)\le \psi(t,\delta)+\phi(t,\delta),$$
    and analogous equation holding for the bids. 
    Replacing these results in equations \eqref{eq:shrinkbid} and \eqref{eq:shrinkask} gives
    \begin{equation}
        (\mu-b^\star)F(b^\star)-(\mu-B_{t+1})F(B_{t+1})\le 2\psi(t,\delta)+2\phi(t,\delta),\label{eq:boundbid}
    \end{equation}
    \begin{equation}
        (a^\star-\mu)S(a^\star)-(A_{t+1}-\mu)S(A_{t+1})\le 2\psi(t,\delta)+2\phi(t,\delta).\label{eq:boundask}
    \end{equation}
    Putting together all previous passages gives
    \begin{align*}
        \Rs_T&=T\cdot \sup_{b,a\in [0,1]}J(b,a)-\sum_{t=1}^T J(B_t,A_t)\\
        &\le \sum_{t=1}^T J(b^\star,a^\star)-J(B_t,A_t)+\sqrt T+1\\
        &\le \sqrt T+2 + \sum_{t=1}^{T-1} J(b^\star,a^\star)-(\mu-B_{t+1})F(B_{t+1})-(A_{t+1}-\mu)S(A_{t+1})\\
        &\le \sqrt T+2 + \sum_{t=1}^{T-1} 4\psi(t,\delta)+4\phi(t,\delta).
    \end{align*}
    The last sums can be upper-bounded in an explicit way:
    $$\sum_{t=1}^{T-1} \psi(t,\delta)=\sum_{t=1}^{T-1}\sqrt{\frac{3\log(3T/\delta)}{4t}}\le \sqrt{3T\log(3T/\delta)},$$
    and
    $$\sum_{t=1}^{T-1} \phi(t,\delta)=\sum_{t=1}^{T-1}\sqrt{\frac{\log(3\log(T)/\delta)}{t}}\le \sqrt{4T\log(3\log(T)/\delta)}.$$
    This completes the proof.
\end{proof}

\begin{lem}[Elimination scheme monotonicity]\label{lem:elim}
    For every $t=1,\dots T-1$ we have $\Gamma_\text{t,bid}^*\le \Gamma_\text{t+1,bid}^*$ and
    $\Gamma_\text{t,ask}^*\le \Gamma_\text{t+1,ask}^*$.
\end{lem}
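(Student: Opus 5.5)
The plan is to prove the bid statement by tracking a single surviving price whose pessimistic value can only increase over time. The ask statement is symmetric: replace $\widehat F^\text{low}$ by $\widehat S^\text{low}$ and $\widehat\mu^\text{low}$ by $\widehat\mu^\text{up}$, and reverse the order on prices. Write $\Gamma_{t,\text{bid}}(x)=(\widehat\mu_t^\text{low}-x)\widehat F_t^\text{low}(x)$, maximized over $\As_{t-1}$. I will use two monotonicity facts that hold by construction in \Cref{eq:bid_ci} and \Cref{eq:mu_ci}, because of the running $\max$ and $\min$. First, $\widehat\mu_t^\text{low}$ and $\widehat F_t^\text{low}(x)$ are nondecreasing in $t$. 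Second, $\widehat F_t^\text{low}\le \widehat F_t^\text{up}$ and $\widehat\mu_t^\text{low}\le\widehat\mu_t^\text{up}$; these hold at least once the confidence intervals are nonempty, which I take as given, as the algorithm implicitly does.

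\textbf{Step 1 (nonnegative maximizer).} I will show by induction on $t$ that $\Gamma^*_{t,\text{bid}}\ge 0$. It then follows that some maximizer $x_t\in\As_{t-1}$ satisfies $x_t\le \widehat\mu_t^\text{low}$: if the maximum is $0$, take any point of $\As_{t-1}$ below $\widehat\mu_t^\text{low}$. For $t=1$, the point $0\in\As_0$ gives $\Gamma_{1,\text{bid}}(0)=\widehat\mu^\text{low}_1\widehat F^\text{low}_1(0)\ge0$. For the inductive step, it suffices to show that $x_t\in\As_t$, since then
$$x_t\le\widehat\mu_t^\text{low}\le\widehat\mu_{t+1}^\text{low}$$
and $\Gamma_{t+1,\text{bid}}(x_t)\ge0$.

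\textbf{Step 2 (survival of $x_t$).} This is the main obstacle, because the elimination in lines \ref{algline:restbid}--\ref{algline:restask} could in principle cut off $x_t$. I must show $B_t\le x_t\le A_t$.
\begin{itemize}
\item For the lower end: since $\widehat\mu_t^\text{low}-x_t\ge0$ and $\widehat F_t^\text{up}(x_t)\ge\widehat F_t^\text{low}(x_t)$, we get $\Theta_\text{bid}(x_t)\ge\Gamma_\text{bid}(x_t)=\Gamma^*_\text{bid}$. Hence $x_t$ belongs to the set over which $B_t$ is the minimum, so $B_t\le x_t$.
\item For the upper end: I will run the mirror of Step 1 for asks. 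This gives an ask maximizer $y_t\ge\widehat\mu_t^\text{up}$ with $\Theta_\text{ask}(y_t)\ge\Gamma^*_\text{ask}$, so $A_t\ge y_t$. Then
$$x_t\le\widehat\mu_t^\text{low}\le\widehat\mu_t^\text{up}\le y_t\le A_t.$$
\end{itemize}
The bid and ask inductions therefore have to be carried out jointly. If $x_t$ and $y_t$ fall on the boundary of the confidence intervals, I will check that the weak inequalities still go through.

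\textbf{Step 3 (monotonicity).} With $x_t\in\As_t$ and both factors of $\Gamma_{t,\text{bid}}(x_t)$ nonnegative and nondecreasing in $t$,
$$\Gamma^*_{t+1,\text{bid}}\ge\Gamma_{t+1,\text{bid}}(x_t)=(\widehat\mu_{t+1}^\text{low}-x_t)\widehat F_{t+1}^\text{low}(x_t)\ge(\widehat\mu_{t}^\text{low}-x_t)\widehat F_{t}^\text{low}(x_t)=\Gamma^*_{t,\text{bid}}.$$
This is the claim. The same chain with $(y_t-\widehat\mu_t^\text{up})\widehat S_t^\text{low}(y_t)$ gives the ask claim: here $\widehat\mu^\text{up}$ is nonincreasing and $\widehat S^\text{low}=1-\widehat F^\text{up}$ is nondecreasing, so both factors again grow.
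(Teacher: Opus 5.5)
Your proposal follows the same route as the paper: take a maximizer of the pessimistic index at step $t$, show it survives the elimination step, and conclude by monotonicity of the running max/min confidence bounds. It also fills in the points the paper's one-line argument takes for granted. These are why the maximizer survives (via $\Theta_{\text{bid}}(x_t)\ge\Gamma_{\text{bid}}(x_t)$ for the lower end and the jointly tracked ask maximizer $y_t\ge\widehat\mu_t^{\text{up}}$ for the upper end), and why $\widehat\mu_t^{\text{low}}-x_t\ge 0$, which the final monotone chain requires.

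Your standing assumption that the confidence intervals are nonempty ($\widehat F_t^{\text{low}}\le\widehat F_t^{\text{up}}$ and $\widehat\mu_t^{\text{low}}\le\widehat\mu_t^{\text{up}}$) holds on the good event $E^c$. That is the only place the lemma is used, and the paper's argument relies on the same assumption implicitly.
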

\begin{proof}
    We prove $\Gamma_\text{t,bid}^*\le \Gamma_\text{t+1,bid}^*$, as the other part is analogous. By definition, 
    $$\Gamma_\text{t,bid}^*=\max_{x\in \As_{t-1}}(\widehat \mu_t^\text{low}-x)\widehat F_t^\text{low}(x).$$
    Call $x^*$ the arm realizing this maximum. As $x^*$ had the highest lower bound at $t$, it is still active at $t+1$,
    \begin{align*}\Gamma_\text{t+1,bid}^*&\ge (\widehat \mu_{t+1}^\text{low}-x^*)\widehat F_{t+1}^\text{low}(x^*)\\
        &\ge (\widehat \mu_{t}^\text{low}-x^*)\widehat F_{t}^\text{low}(x^*)=\Gamma_\text{t,bid}^*,
    \end{align*}
    where the second inequality comes from the monotonic structure of \cref{eq:bid_ci} and \cref{eq:mu_ci}.
\end{proof}

\begin{lem}\label{lem:survival}
    Under Assumptions \ref{ass:bounded}, \ref{ass:stoch_val}, \ref{ass:order}, and \ref{ass:stoch_mark}. Let
    $$b^\star,a^\star\in \argmax_{b,a\in \As} J(b,a).$$
    Under $E^c$, the failure event in \Cref{eq:failure}, the optimal arms $b^\star,a^\star\in \bigcap_{t=1}^T\As_t$ while running \Cref{alg:OPSR}.
\end{lem}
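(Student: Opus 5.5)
Write $\Theta_{t,\text{bid}}$, $\Gamma_{t,\text{bid}}$, $\Gamma^*_{t,\text{bid}}$ (and their ask analogues) for the quantities computed at round $t$. The argument is an induction on $t$, treating the bid side only; the ask side is symmetric, with $S$, $\widehat\mu^\text{up}$ and the maximum in line \ref{algline:restask}. Since $J(b,a)=(\mu-b)F(b)+(a-\mu)S(a)$ is additively separable, $b^\star$ maximizes $g(x):=(\mu-x)F(x)$ over $\As$. The base case is immediate: $\As_0=[0,1]\cap\As$ contains every grid point. For the inductive step, assume $b^\star\in\As_{t-1}$. Because $B_t=\min\{x\in\As_{t-1}:\Theta_{t,\text{bid}}(x)\ge\Gamma^*_{t,\text{bid}}\}$, it suffices to show $\Theta_{t,\text{bid}}(b^\star)\ge\Gamma^*_{t,\text{bid}}$. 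That gives $B_t\le b^\star$, and the symmetric statement gives $A_t\ge a^\star$, so $b^\star,a^\star\in[B_t,A_t]\cap\As=\As_t$.

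The core claim is a sandwich that holds under $E^c$ for every $x\in\As_{t-1}$:
\[
\Gamma_{t,\text{bid}}(x)\le g(x)\le \Theta_{t,\text{bid}}(x).
\]
Under $E^c$ the running min/max updates in \Cref{eq:bid_ci} and \Cref{eq:mu_ci} keep valid confidence intervals. This requires each $x\in\As_{t-1}$ to have belonged to all previous $\As_\tau$, which holds because the sets are nested by construction ($B_t,A_t$ are chosen inside $\As_{t-1}$). So \Cref{prop:ecdf} applies: $\widehat F^\text{low}_t(x)\le F(x)\le\widehat F^\text{up}_t(x)$ and $\widehat\mu^\text{low}_t\le\mu$.

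Given the sandwich, the conclusion follows from
\[
\Theta_{t,\text{bid}}(b^\star)\ge g(b^\star)\ge g(x^*)\ge\Gamma_{t,\text{bid}}(x^*)=\Gamma^*_{t,\text{bid}},
\]
where $x^*\in\As_{t-1}\subseteq\As$ is the maximizer of $\Gamma_{t,\text{bid}}$. \Cref{lem:elim} is not even needed.

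The main obstacle is the sign of the factor $(\widehat\mu^\text{low}_t-x)$, which decides which way the sandwich goes.
\begin{itemize}
\item If $\widehat\mu^\text{low}_t\ge x$, the factor is nonnegative, so $\Gamma_{t,\text{bid}}(x)=(\widehat\mu^\text{low}_t-x)\widehat F^\text{low}_t(x)\le(\mu-x)F(x)$. For the upper bound, $\Theta_{t,\text{bid}}(x)$ is not obviously $\ge g(x)$, because $\widehat\mu^\text{low}_t\le\mu$ pushes the wrong way.
\item I would handle this by tracking $(\widehat\mu^\text{low}_t-x)\widehat F^\text{up}_t(x)$ against $(\widehat\mu^\text{low}_t-x)F(x)$. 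This means comparing $b^\star$ not with $g$ but with the intermediate surrogate $\tilde g(x)=(\widehat\mu^\text{low}_t-x)F(x)$.
\item Then $\Gamma_{t,\text{bid}}\le\tilde g\le\Theta_{t,\text{bid}}$ holds wherever $\widehat\mu^\text{low}_t-x\ge 0$. When $\widehat\mu^\text{low}_t-x<0$, all three quantities are $\le 0\le\Gamma_{t,\text{bid}}(0)$, which is attained at $x=0\in\As$.
\end{itemize}
The remaining gap is that $b^\star$ maximizes $g$ rather than $\tilde g$. I would close it by noting that $g-\tilde g=(\mu-\widehat\mu^\text{low}_t)F(x)$ is nondecreasing in $x$.

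For any $x<b^\star$ (the only points whose selection could push $B_t$ above $b^\star$ are not the issue; what matters is $b^\star$ itself qualifying), the maximizer $x^*$ of $\tilde g$ over the active set satisfies $x^*\ge b^\star$ or $\tilde g(x^*)\le\tilde g(b^\star)+$ a nonpositive correction. Concretely, if $x^*<b^\star$, then $\tilde g(x^*)=g(x^*)-(\mu-\widehat\mu^\text{low}_t)F(x^*)\le g(b^\star)-(\mu-\widehat\mu^\text{low}_t)F(b^\star)=\tilde g(b^\star)$, since $g(x^*)\le g(b^\star)$ and $F(x^*)\le F(b^\star)$. Hence $\Theta_{t,\text{bid}}(b^\star)\ge\tilde g(b^\star)\ge\Gamma^*_{t,\text{bid}}$ in that case.

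If instead $x^*\ge b^\star$, some point at or below $x^*$ with $\Theta\ge\Gamma^*$ might still exceed $b^\star$, so this case needs separate care. I expect the monotonicity argument to be the delicate part, and I would verify it carefully, possibly by strengthening the induction to show $B_t\le b^\star$ directly through the monotonicity of $g-\tilde g$.
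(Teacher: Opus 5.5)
There is a genuine gap: you run the case analysis on the wrong side of $b^\star$.

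\textbf{The reduction is false.} The claim ``it suffices to show $\Theta_{t,\text{bid}}(b^\star)\ge\Gamma^*_{t,\text{bid}}$'' targets something that does not hold in general. Whenever $\widehat\mu^{\text{low}}_t<b^\star$, you get $\Theta_{t,\text{bid}}(b^\star)=(\widehat\mu^{\text{low}}_t-b^\star)\widehat F^{\text{up}}_t(b^\star)<0\le\Gamma^*_{t,\text{bid}}$. This already happens at $t=1$: as soon as $\phi(1,\delta)\ge1$ one has $\widehat\mu^{\text{low}}_1=0$, so $\Theta_{1,\text{bid}}(b^\star)<0=\Gamma^*_{1,\text{bid}}$ whenever $b^\star>0$. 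Yet $B_1=0\le b^\star$, because a point to the left of $b^\star$ qualifies.

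\textbf{Sign error in the case $x^*<b^\star$.} Since $\mu-\widehat\mu^{\text{low}}_t\ge0$ and $F(x^*)\le F(b^\star)$, you get $-(\mu-\widehat\mu^{\text{low}}_t)F(x^*)\ge-(\mu-\widehat\mu^{\text{low}}_t)F(b^\star)$. That is the opposite of what your chain needs. Moreover, the step $\Theta_{t,\text{bid}}(b^\star)\ge\tilde g(b^\star)$ needs $\widehat\mu^{\text{low}}_t\ge b^\star$, which is not available in that case. The monotonicity of $g-\tilde g$ controls points to the \emph{right} of $b^\star$, not to the left.

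\textbf{\Cref{lem:elim} is needed.} Your appeal to $\Gamma_{t,\text{bid}}(0)\ge0$ is void once $0\notin\As_{t-1}$. Some argument that $\Gamma^*_{t,\text{bid}}\ge0$ for all $t$ is required. The paper gets it from \Cref{lem:elim}: $\Gamma^*_{t,\text{bid}}\ge\Gamma^*_{1,\text{bid}}\ge\Gamma_{1,\text{bid}}(0)\ge0$.

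\textbf{The correct split.} This is the bid-side mirror of the paper's ask-side proof, where the only dangerous maximizer is $a<a^\star$. Because $\Gamma^*_{t,\text{bid}}\ge0$, the maximizer $x^*$ of $\Gamma_{t,\text{bid}}$ over $\As_{t-1}$ can be taken with $x^*\le\widehat\mu^{\text{low}}_t$ (automatic when $\Gamma^*_{t,\text{bid}}>0$). Then
\[
\Theta_{t,\text{bid}}(x^*)-\Gamma_{t,\text{bid}}(x^*)=(\widehat\mu^{\text{low}}_t-x^*)\bigl(\widehat F^{\text{up}}_t(x^*)-\widehat F^{\text{low}}_t(x^*)\bigr)\ge0,
\]
so $x^*$ qualifies by itself.
\begin{itemize}
\item If $x^*\le b^\star$, you are done without $b^\star$ qualifying: $B_t\le x^*\le b^\star$.
\item If $x^*>b^\star$, then $\widehat\mu^{\text{low}}_t-b^\star>\widehat\mu^{\text{low}}_t-x^*\ge0$, and your own ingredients close the case:
\[
\Gamma^*_{t,\text{bid}}=\Gamma_{t,\text{bid}}(x^*)\le\tilde g(x^*)=g(x^*)-(\mu-\widehat\mu^{\text{low}}_t)F(x^*)\le g(b^\star)-(\mu-\widehat\mu^{\text{low}}_t)F(b^\star)=\tilde g(b^\star)\le\Theta_{t,\text{bid}}(b^\star).
\]
This uses $g(x^*)\le g(b^\star)$ together with $F(x^*)\ge F(b^\star)$.
\end{itemize}
This chain is exactly the paper's \eqref{eq:optim}--\eqref{eq:ge} with the signs flipped. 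So the surrogate $\tilde g$ and the monotonicity of $g-\tilde g$ are the right tools. As written, though, you apply them incorrectly to the case that does not need them, and leave open the case where they actually work.
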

\begin{proof}
    We prove that for any time step $t$, $A_t\ge a^\star$. The proof for the optimal bid is equivalent to reversing the signs. This completes the statement: as $b^\star\le \mu\le a^\star$, proving that, at any time-step $t$, $A_t\ge a^\star$ and $b^\star\ge B_t$ implies that both are always in $\As_t$.

    By design of algorithm \ref{alg:OPSR}, a necessary condition to eliminate $a^\star$ is that at some $t$ there is $a< a^\star$ such that    $$\Theta_\text{ask}(a^\star) < \Gamma_\text{t,ask}=\Gamma_\text{ask}(a).$$   
    
    We first note that for this condition to occur, it is necessary that $a\ge \widehat \mu_t^\text{up}$. Indeed, if this condition is not satisfied, applying \cref{lem:elim} gives
    $$0=\Gamma_\text{0,ask}\le \Gamma_\text{t,ask}=\Gamma_\text{ask}(a)<0.$$
    We can therefore continue the proof assuming $a- \widehat \mu_t^\text{up}$ to be positive.
    
    The condition $\Theta_\text{ask}(a^\star) < \Gamma_\text{ask}(a)$ wites as
    $$(a^\star-\widehat \mu_t^\text{up})\widehat S_t^\text{up}(a^\star) < (a-\widehat \mu_t^\text{up})\widehat S_t^\text{low}(a).$$
    By definition of event $E$ and $\widehat S_t^\text{up}, \widehat S_t^\text{low}$ we have
    $\widehat S_t^\text{low}(a)\le S(a)\le \widehat S_t^\text{up}(a)$ and $\mu\le \mu_t^\text{up}$.
    It follows that
    \begin{align}
        (a-\widehat \mu_t^\text{up})\widehat S_t^\text{low}(a) &\le (a-\widehat \mu_t^\text{up}) S(a)\\
        &= (a-\mu) S(a)+(\mu-\widehat \mu_t^\text{up}) S(a)\\
        &\le (a^\star-\mu) S(a^\star)+(\mu-\widehat \mu_t^\text{up}) S(a) \label{eq:optim}\\
        &\le (a^\star-\mu) S(a^\star)+(\mu-\widehat \mu_t^\text{up}) S(a^\star)\label{eq:ge}\\
        & = (a^\star-\widehat \mu_t^\text{up})S(a^\star)\\
        & \le (a^\star-\widehat \mu_t^\text{up})\widehat S_t^\text{up}(a^\star).
    \end{align}
    Passage \eqref{eq:optim} comes from the optimality of $a^\star$, while \eqref{eq:ge} from the fact that $a<a^\star$, so $S(a^\star)<S(a)$ and $\mu-\widehat \mu_t^\text{up}$ is negative.
\end{proof}

\section{Proofs from \Cref{sec:gene}}

\begin{thm}\label{thm:newmart}
    Let $X_n$ for $n\in \mathbb N$ be a stochastic process bounded in $[-\sigma/2,\sigma/2]$ (for some $\sigma>0)$, and $S_n:=\sum_{i=1}^nX_i$ such that for any $n$
    \begin{equation}
        \E[X_{n+1}|\Fs_n]\cdot S_n\le 0.\label{eq:mra}
    \end{equation}
    If $n\sigma^2>2$, with probability at least $1-\delta$,
    $$|S_n|\le \sqrt{2n\sigma^2\log(2n/\delta)}.$$
\end{thm}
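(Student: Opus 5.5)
Our approach is to work with the quadratic functional $S_n^2$ rather than with $S_n$ directly, since the mean-reversion condition \eqref{eq:mra} controls exactly the cross term in its increments. Expanding gives
\[
S_{n+1}^2 = S_n^2 + 2S_nX_{n+1} + X_{n+1}^2 .
\]
We split the cross term into a predictable part and a martingale part. Write $D_{n+1}:=X_{n+1}-\E[X_{n+1}|\Fs_n]$, which is a martingale difference bounded in an interval of length $\sigma$. Then
\[
2S_nX_{n+1} = 2S_n\E[X_{n+1}|\Fs_n] + 2S_nD_{n+1} \le 2S_nD_{n+1},
\]
where the inequality is exactly \eqref{eq:mra}. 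Using also $X_{n+1}^2\le\sigma^2/4$ and summing,
\[
S_n^2 \le \frac{n\sigma^2}{4} + 2\sum_{i=1}^{n-1} S_iD_{i+1}.
\]
So the problem reduces to bounding the martingale $W_n:=\sum_{i<n}S_iD_{i+1}$.

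The difficulty is that the increments of $W_n$ have size $|S_i|\sigma$, which is random and can be as large as $i\sigma/2$. A crude Azuma bound would only give $|W_n|\lesssim n^{3/2}\sigma^2$, and hence the useless estimate $|S_n|\lesssim n^{3/4}$. The plan is therefore a self-normalized or bootstrapping argument. We use Freedman's inequality, or equivalently the exponential supermartingale $\exp\big(\lambda W_n-\tfrac{\lambda^2\sigma^2}{2}\sum_i S_i^2\big)$ coming from Hoeffding's lemma applied conditionally. A cleaner alternative is to apply Markov's inequality to an exponential of $S_n^2$.

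The concrete route we would try first is the following. Define $Z_n:=\exp(\lambda S_n^2-c_n)$ with a deterministic compensator $c_n$, and show that it is a supermartingale for $\lambda$ of order $1/(n\sigma^2)$. Conditionally on $\Fs_n$, the quantity $S_{n+1}^2-S_n^2=2S_nX_{n+1}+X_{n+1}^2$ has conditional mean at most $\sigma^2/4$ by \eqref{eq:mra}. Its fluctuating part $2S_nD_{n+1}$ lies in a range of length $2|S_n|\sigma$. Hoeffding's lemma then gives
\[
\E\left[e^{\lambda(S_{n+1}^2-S_n^2)}\middle|\Fs_n\right]\le \exp\left(\frac{\lambda\sigma^2}{4}+\frac{\lambda^2\sigma^2S_n^2}{2}\right).
\]
The extra term $\lambda^2\sigma^2S_n^2/2$ can be absorbed by letting the coefficient of $S_n^2$ vary with time. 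We take $\lambda_{i+1}\le\lambda_i(1-\lambda_i\sigma^2/2)$, or more simply $\lambda_i=1/(2\sigma^2 i)$-type choices tuned so that the product telescopes. This yields that $\E[\exp(\lambda_n S_n^2)]$ is bounded by a constant, and in fact by something polynomial in $n$. Markov's inequality then gives
\[
S_n^2\le \frac{\log(\mathrm{poly}(n)/\delta)}{\lambda_n},
\]
that is, $|S_n|\le\sqrt{2n\sigma^2\log(2n/\delta)}$. The condition $n\sigma^2>2$ should enter when simplifying the constant produced by the telescoping product (for instance a factor like $n\sigma^2/2\ge1$) into the $\log(2n)$ term.

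The main obstacle is getting the time-varying $\lambda_i$ to telescope while keeping $\lambda_n\approx1/(2n\sigma^2)$. This also requires care with Hoeffding's lemma when the random variable has a nonpositive mean only after the $S_n$-dependent shift. If the exact telescoping fails, our fallback is a peeling argument: union bound over dyadic levels of $\max_{i\le n}|S_i|$, and on each level apply Freedman's inequality to $W_n$ with variance proxy $\sigma^2 n\cdot(\text{level})^2$. This closes the bootstrap at the cost of the $\log(2n/\delta)$ factor.
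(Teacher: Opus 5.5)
Your proposal is correct, and it takes a genuinely different route from the paper. Your first route already closes, so the peeling fallback is not needed.

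With your own choice $\lambda_i=1/(2\sigma^2 i)$ you get $\lambda_{i+1}+\lambda_{i+1}^2\sigma^2/2=\frac{1}{2\sigma^2}\bigl(\frac{1}{i+1}+\frac{1}{4(i+1)^2}\bigr)\le\lambda_i$. Your conditional Hoeffding estimate is valid once you use $2S_i\E[X_{i+1}\mid\Fs_i]\le 0$ and $X_{i+1}^2\le\sigma^2/4$ pointwise before exponentiating. It then shows that $Z_i=\exp(\lambda_i S_i^2-H_i/8)$, with $H_i=\sum_{j\le i}1/j$, is a supermartingale with $Z_0=1$. Hence $\E[e^{S_n^2/(2n\sigma^2)}]\le (en)^{1/8}\le 2n$, and Markov gives the two-sided claim at once. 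The hypothesis $n\sigma^2>2$ is never used.

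The paper instead works with the linear transform $\E[e^{\lambda S_n}]$ at a fixed $\lambda>0$ and splits on the sign of $S_i$. On $\{S_i>0\}$ the drift factor $e^{\lambda\E[X_{i+1}\mid\Fs_i]}$ is at most one. On $\{S_i\le 0\}$ the factor $e^{\lambda S_i}$ is at most one and the drift is bounded crudely. This gives a ``restart'' recursion $\E[e^{\lambda S_{i+1}}]\le e^{\lambda^2\sigma^2/2}\bigl(\E[e^{\lambda S_i}]+e^{\lambda}\bigr)$. The paper then applies Chernoff with $\lambda=t/(n\sigma^2)$, absorbs the restart term using $t\ge 2$ (this is where $n\sigma^2>2$ enters), and gets the lower tail by a sign flip. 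So the $\log(2n)$ comes from the $n$ restart terms in the paper, and from the harmonic compensator in your argument.

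Each route buys something different.
\begin{itemize}
\item The paper's route is one-sided: its upper tail uses the drift condition only on $\{S_i>0\}$. You need $S_i\E[X_{i+1}\mid\Fs_i]\le 0$ for both signs to kill the cross term.
\item Your route controls both tails in one shot, needs no side condition, and gives a slightly smaller logarithmic factor.
\item Since your $\lambda_i$ does not depend on $n$, Ville's inequality applied to $Z_i$ gives a time-uniform version: $S_i^2\le 2i\sigma^2\bigl(H_i/8+\log(1/\delta)\bigr)$ for all $i$ simultaneously. This would make the union bound over $t$ in \Cref{thm:martingaleMR} unnecessary.
\end{itemize}

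Your route is also the more robust derivation of the statement as written. In the paper's final displays the restart terms lose their factor $e^{m\lambda^2\sigma^2/2}$. Once that factor is kept, the linear route gives $\Prob(S_n>t)\le 2n\,e^{-(t-1)^2/(2n\sigma^2)}$, which matches the stated bound only up to an additive constant in $t$.
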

\begin{proof}
    For any $\lambda \in \mathbb R$, the following inequalities hold
    \begin{align*}
        \E[\exp(\lambda S_{i+1})|\Fs_i] &= \E[\exp(\lambda S_{i}+\lambda \E[X_{i+1}|\Fs_i]+\lambda (X_{i+1}-\E[X_{i+1}|\Fs_i]))|\Fs_i]\\
        &= \E[\exp(\lambda S_{i}+\lambda \E[X_{i+1}|\Fs_i])|\Fs_i]\\
        &\qquad \cdot \E[\exp(\lambda (X_{i+1}-\E[X_{i+1}|\Fs_i]))|\Fs_i]\\
        &\le \exp(\lambda S_{i})\exp(\lambda \E[X_{i+1}|\Fs_i]) e^{\frac{\lambda^2\sigma^2}{2}}.
    \end{align*}
    The last steps follow from the fact that 
    $X_{i+1}-\E[X_{i+1}|\Fs_i]$ is zero-mean, independent from the past (by definition of conditional expectation), and bounded in $[-\sigma,\sigma]$, so also $\sigma-$sub-Gaussian.

    By definition,
    $$\exp(\lambda S_{i})\exp(\lambda \E[X_{i+1}|\Fs_i])=(\mathbbm{1}\{S_i\le 0\}+\mathbbm{1}\{S_i> 0\})\exp(\lambda S_{i})\exp(\lambda \E[X_{i+1}|\Fs_i]).$$

    Using \Cref{eq:mra} it follows that only one between $S_i$ and $X_{i+1}$ can be positive.
    Therefore, the first term is
    $$\mathbbm{1}\{S_i\le 0\}\exp(\lambda S_{i})\exp(\lambda \E[X_{i+1}|\Fs_i])\le \mathbbm{1}\{S_i\le 0\}\exp(\lambda \E[X_{i+1}|\Fs_i]).$$
    On the other side,
    $$\mathbbm{1}\{S_i> 0\}\exp(\lambda S_{i})\exp(\lambda \E[X_{i+1}|\Fs_i])\le \mathbbm{1}\{S_i> 0\}\exp(\lambda S_{i}).$$
    Together, the two inequalities imply
    \begin{align*}
        \E[\exp(\lambda S_{i+1})|\Fs_i]&\le e^{\frac{\lambda^2\sigma^2}{2}}\max\{\exp(\lambda S_{i}), \exp(\lambda)\}\\
        &\le e^{\frac{\lambda^2\sigma^2}{2}}\left(\exp(\lambda S_{n})+\exp(\lambda)\right).
    \end{align*}
    Which, by induction, means
    $$\E[\exp(\lambda S_{n})]\le e^{\frac{n\lambda^2\sigma^2}{2}}+e^\lambda\sum_{m=0}^{n-1}e^{\frac{m\lambda^2\sigma^2}{2}}\le n e^{\frac{n\lambda^2\sigma^2}{2}}+ne^{\lambda}.$$
    
    Let us $\lambda = t/{n\sigma^2}$, so that
    $$e^{\frac{n\lambda^2\sigma^2-2\lambda t}{2}} \to e^{-\frac{t^2}{2n\sigma^2}}\qquad e^{\lambda(1-t)}\to e^{\frac{t-t^2}{n\sigma^2}}$$
    and Markov's inequality:    
    \begin{align*}
        \Prob(S_n>t)\le\exp(-\lambda t)\E[\exp(\lambda S_{n})]&\le e^{\frac{n\lambda^2\sigma^2-2\lambda t}{2}}+\sum_{m=0}^{n-1}e^{\frac{m\lambda^2\sigma^2}{2}}e^{\lambda(1-t)}\\
        &\le e^{\frac{n\lambda^2\sigma^2-2\lambda t}{2}}+n\max\{,e^{\lambda(1-t)}\}\\
        &\le ne^{-\frac{t^2}{2n\sigma^2}}+ne^{\frac{t-t^2}{n\sigma^2}}\\
        &\overset{t\ge 2}{\le} 2ne^{-\frac{t^2}{2n\sigma^2}}.
    \end{align*}
    Fixing $t=\sqrt{2n\sigma^2\log(2n/\delta)}$, the former passages show that
    \begin{align*}
        \Prob\left(S_n>\sqrt{2n\sigma^2\log(n/\delta)}\right)\le 2ne^{-\frac{2n\sigma^2\log(n/\delta)}{2n\sigma^2}}=\delta.\\
    \end{align*}
    Changing the sign $X_i\to -X_i$ in the whole proof shows the bound the other way round and completes the proof (note that \cref{eq:mra} does not change)
\end{proof}

\martingaleMR*
\begin{proof}
    We prove the two parts separately.
    
    \textbf{(Part 1)} By superposition of the effects, we can write
    $M_t=\overline M_t+\widetilde M_t$, where, calling $\epsilon_t:=\eta_t-\mu$,
    $$\widetilde M_{t+1}=\sum_{\tau=0}^{k-1}\gamma_\tau \widetilde M_{t-\tau}+(1-\gamma)\epsilon_{t+1}\qquad \overline M_{t+1}=\sum_{\tau=0}^{k-1}\gamma_\tau \overline M_{t-\tau}+(1-\gamma)\mu.$$

    By convention, we assume $\widetilde M_t=0$ for $t\le 0$, moving the initial conditions to the deterministic part. 
    
    The stochastic part satisfies the following equations
    \begin{align*}
        \widetilde S_n:=\sum_{t=1}^n \widetilde M_{t}&=\sum_{t=0}^{n-1}\sum_{\tau=0}^{k-1}\gamma_\tau \widetilde M_{t-\tau}+(1-\gamma)\epsilon_{t+1}\\
        &=(1-\gamma)\xi_n+\sum_{\tau=0}^{k-1}\sum_{t=0}^{n-1}\gamma_\tau \widetilde M_{t-\tau}\\
        &=(1-\gamma)\xi_n+\sum_{\tau=0}^{k-1}\gamma_\tau \widetilde S_{n-1-\tau},
    \end{align*}
    where    $$\xi_n:=\sum_{t=1}^n\epsilon_t.$$
    We can prove by induction that, for any $t\le n$, $\widetilde S_t$ is $\sqrt n/2-$subgaussian. For $S_1$ the result holds trivially as $\epsilon_1\in [0,1]$. For general $t+1\le n$, denoting $\|\cdot\|_{\psi_2}$ the Orclidz norm, we have
    \begin{align*}
        \|\widetilde S_{t+1}\|_{\psi_2}&\le (1-\gamma)\|\xi_{t+1}\|_{\psi_2}+\sum_{\tau=0}^{k-1}\gamma_\tau\|\widetilde S_{t-\tau}\|_{\psi_2}\\
        &\le (1-\gamma)(\sqrt {t+1}/2)+\gamma \max_{\tau \le t}\|\widetilde S_{\tau}\|_{\psi_2}\\
        &\le (1-\gamma)(\sqrt {t+1}/2)+\gamma (\sqrt {t+1}/2)=\sqrt {t+1}/2.
    \end{align*}
    As the Orclidz norm corresponds to the sug-Gaussian parameter, this shows that, at any time-step $t$,
    \begin{equation}
        \forall t\in \mathbb N\qquad \widetilde S_t\text{ is } \frac{\sqrt t}{2}\text{-sub-Gaussian.}\label{eq:stoch}
    \end{equation}

    A similar decomposition holds for the deterministic part
    \begin{align*}
        \overline S_n:=\sum_{t=1}^n \overline M_{t}&=\sum_{t=0}^{n-1}\sum_{\tau=0}^{k-1}\gamma_\tau \overline M_{t-\tau}+(1-\gamma)\mu\\
        &=(1-\gamma)n\mu+\sum_{\tau=0}^{k-1}\sum_{t=0}^{n-1}\gamma_\tau \overline M_{t-\tau}\\
        &=(1-\gamma)n\mu+\sum_{\tau=0}^{k-1}\gamma_\tau \left(\overline S_{n-1-\tau}+\sum_{q=1}^{\tau}M_{-q}\right).
    \end{align*}
    This time, we want to prove by induction that
    $$|\overline S_{t}-t\mu|\le \frac{\gamma (\mu + k)}{1-\gamma}=:C.$$
    Indeed,
    \begin{align*}
        |\overline S_{t+1}-(t+1)\mu|&\le \left|(1-\gamma)(t+1)\mu+\sum_{\tau=0}^{k-1}\gamma_\tau \left(\overline S_{t-\tau}+\sum_{q=1}^{\tau}M_{-q}\right)-(t+1)\mu\right|\\
        &=\left|(1-\gamma)(t+1)\mu+\sum_{\tau=0}^{k-1}\gamma_\tau Z_\tau-(t+1)\mu\right|,
    \end{align*}
    where
    $$Z_\tau=S_{t-\tau}+\sum_{q=1}^{\tau}M_{-q},\qquad Z_\tau\in [(t-\tau)\mu-C, (t-\tau)\mu+C+\tau],$$
    since, by induction, $|S_{t-\tau}-(t-\tau)\mu|\le C$ and $0\le \sum_{q=1}^{\tau}M_{-q}\le \tau$. For this reason,
    $$\gamma ((t-k)\mu-C)\le \sum_{\tau=0}^{k-1}\gamma_\tau Z_\tau\le \gamma(t\mu+k+C).$$

    Taking the two extrema, we have
    \begin{align*}
        (1-\gamma)(t+1)\mu+\sum_{\tau=0}^{k-1}\gamma_\tau Z_\tau-(t+1)\mu &\le (1-\gamma)(t+1)\mu+\gamma(t\mu+k+C)\\
        &\qquad -(t+1)\mu\\
        &= \gamma (k +C-\mu).
    \end{align*}
    and
    \begin{align*}
        (1-\gamma)(t+1)\mu+\sum_{\tau=0}^{k-1}\gamma_\tau Z_\tau-(t+1)\mu &\ge (1-\gamma)(t+1)\mu+\gamma ((t-k)\mu-C)\\
        &\qquad -(t+1)\mu\\
        &= -\gamma (\mu+k+C).
    \end{align*}
    Replacing
    $C=\frac{\gamma (\mu + k)}{1-\gamma}$, gives the same quantity. As $\mu\le 1$, this proves that
    \begin{equation}
        \forall t\in \mathbb N \qquad |\overline S_t-t\mu|\le \frac{\gamma (k+1)}{1-\gamma}.\label{eq:det}
    \end{equation}

    It is now possible to recollect the previous formulas to give an upper bound to the discrepancy between $\mu$ and $\widehat \mu_t$. From \cref{eq:stoch}, it follows that, w.p. at least $1-\delta$,
    $$\forall 1\le t\le T\qquad \left|\frac{\widetilde S}{t}\right|\le \sqrt{\frac{\log(2T/\delta)}{4t}}.$$
    
    From \cref{eq:det}, we get that, with the same probability
    \begin{align*}
        |\mu-\widehat\mu_t| =\left|\mu-\frac{\sum_{\tau=1}^tM_\tau}{t}\right|&=\left|\mu-\frac{\widetilde S_t+\overline S_t}{t}\right|\\
        &\le \frac{\gamma (k+1)}{(1-\gamma)t}+\left|\frac{\widetilde S_t}{t}\right|\le \frac{\gamma (k+1)}{(1-\gamma)t}+\sqrt{\frac{\log(2T/\delta)}{4t}},
    \end{align*}
    which completes the proof.

    \textbf{(Part 2)}
    The former \cref{thm:newmart} applies with $\sigma=1$. At any time-step $t>2$, with probability at least $1-\delta$,
    \begin{align*}
        |\widehat \mu_t - \mu| &=\left|\frac{1}{t}\sum_{\tau=1}^t(M_t - \mu)\right|\\
        &=\left|\frac{S_t}{t}\right|\le \sqrt{\frac{2\log(2t/\delta)}{t}}.
    \end{align*}
    For $t\le 2$, we note that the inequality is immediately verified by the fact that $S_t/t\in [-1,1]$ a.s..
    Making a union bound over $t=1,...T$ ends the proof.
\end{proof}

\regretmr*
\begin{proof}
    The proof is done under $E^c$, for $E$ defined as in \cref{eq:failure} in $\phi\gets \overline \phi$ and $\mu$ coming from \cref{ass:mr_mar}. Thanks to assumptions \ref{ass:bounded} \ref{ass:stoch_val} and \ref{ass:mr_mar},  \cref{prop:ecdf} and \cref{thm:martingaleMR} show that $\Prob(E)<2\delta$.

    Let us call $p=0,\dots \lfloor \log T\rfloor$ the current phase of the algorithm. By design, the sequences of bids and asks can be written as sequences $B_{(p)},A_{(p)}$ each repeated $2^{p}$ times.

    By equation \eqref{eq:regdef}, the regret writes as follows
    \begin{align*}
        R_T &\le \sqrt T+\sup_{b,a\in \As}\sum_{t=1}^T J_t(b,a)-J_t(B_t,A_t)\\
        &\le \sqrt T+\sum_{p=0}^{\lfloor \log T\rfloor}\sup_{b,a\in \As}\sum_{\tau=0}^{2^{p}-1} J_{2^p+\tau}(b,a)-J_{2^p+\tau}(B_{2^p+\tau},A_{2^p+\tau})\\
        &\overset{(*)}{\le} \sqrt T+\sum_{p=0}^{\lfloor \log T\rfloor}\sup_{b,a\in \As}\sum_{\tau=0}^{2^{p}-1} J_{2^p+\tau}(b,a)-J_{2^p+\tau}(B_{(p)},A_{(p)})\\
        &=:\sqrt T+\sum_{p=0}^{\lfloor \log T\rfloor}R_{T,\ell}.
    \end{align*}
    Where $(*)$ follows from the structure of \textsc{LazyOPSR}, which only switches arms when $p$ changes. 
    When this happens, that is, for $t=2^p$, we can apply  Equations \eqref{eq:boundbid} and \eqref{eq:boundask}:
    \begin{equation}
    \sup_{b,a\in \As}J(b,a)-J(B_{(p)},A_{(p)})\le 4\psi(2^p,\delta)+4\overline\phi(2^p,\delta)=:\kappa_1(p,\delta)
    \label{eq:lampo_blu}
    \end{equation}
    Coming back to the regret, the following upper bound holds for any $p$:
    \begin{align*}
        R_{T,p}&\le \sup_{b,a\in \As}\sum_{\tau=0}^{2^{p}-1} F(b)(M_{2^p+\tau}-b)+S(a)(a-M_{2^p+\tau})\\
        &\qquad -\sum_{\tau=0}^{2^{p}-1}F(B_{(p)})(M_{2^p+\tau}-B_{(p)})+S(A_{(p)})(A_{(p)}-M_{2^p+\tau})\\
        &= \sup_{b,a\in \As} F(b)\left( \sum_{\tau=0}^{2^{p}-1}M_{2^p+\tau}-2^pb\right)+S(a)\left(2^p a-\sum_{\tau=0}^{2^{p}-1} M_{2^p+\tau}\right)\\
        &\qquad -F(B_{(p)})\left(\sum_{\tau=0}^{2^{p}-1} M_{2^p+\tau}-2^p B_{(p)}\right)+S(A_{(p)})\left(2^p A_{(p)}-\sum_{\tau=0}^{2^{p}-1} M_{2^p+\tau}\right)\\
        &= 2^p\left[\sup_{b,a\in \As} F(b)\left( \widehat \mu_{(p)}-b\right)+S(a)\left( a-\widehat \mu_{(p)}\right)\right]\\
        &\qquad -2^p\left[F(B_{(p)})\left(\widehat \mu_{(p)}- B_{(p)}\right)+S(A_{(p)})\left(A_{(p)}-\widehat \mu_{(p)}\right)\right]\qquad \widehat \mu_{(p)}:=2^{-p}\sum_{\tau=0}^{2^{p}-1} M_{2^p+\tau}.
    \end{align*}
    In the previous result, $\widehat \mu_{(p)}$ corresponds to the sample mean of the prices $M_t$ during phase $p$. Crucially, when compared to the global sample mean at the beginning of the phase, $\widehat \mu_{2^p}$ and the one at the end, $\widehat \mu_{2^{p+1}}$, the following equation holds true.
    $$\widehat \mu_{2^{p+1}}=\frac{\widehat \mu_{2^p} + \widehat \mu_{(p)}}{2}\implies \widehat \mu_{(p)}=2\widehat \mu_{2^{p+1}}-\widehat \mu_{2^p}.$$
    By definition of event $E^c$, both $\widehat \mu_{2^p}$ and $\widehat \mu_{2^{p+1}}$ are at most $\overline\phi(t,\delta)$-far from the true mean, with $t=2^p$ / $2^{p+1}$ respectively. Therefore, 
    \begin{equation}
        |\mu_{(p)}-\mu|\le 2\overline\phi(2^{p+1},\delta)+\overline\phi(2^{p},\delta):=\kappa_2(p,\delta)
        \label{eq:lampo_rosso}
    \end{equation}
    We can use Equations \eqref{eq:lampo_blu} and \eqref{eq:lampo_rosso} to complete the regret bound on $R_{T,p}$. In fact
    \begin{align*}
        R_{T,p}&\le 2^p\left[\sup_{b,a\in \As} F(b)\left( \widehat \mu_{(p)}-b\right)+S(a)\left( a-\widehat \mu_{(p)}\right)\right]\\
        &\qquad -2^p\left[F(B_{(p)})\left(\widehat \mu_{(p)}- B_{(p)}\right)+S(A_{(p)})\left(A_{(p)}-\widehat \mu_{(p)}\right)\right]\\
        &\overset{\cref{eq:lampo_rosso}}{\le}2^p\left[\sup_{b,a\in \As} F(b)\left( \mu-b\right)+S(a)\left( a-\mu\right)+2\kappa_2(p,\delta)\right]\\
        &\qquad -2^p\left[F(B_{(p)})\left(\mu- B_{(p)}\right)+S(A_{(p)})\left(A_{(p)}-\mu\right)-2\kappa_2(p,\delta)\right]\\
        &=2^{p+2}\kappa_2(p,\delta)+2^p\left [\sup_{b,a\in \As}J(b,a)-J(B_{(p)},A_{(p)})\right]\\
        &\overset{\cref{eq:lampo_blu}}\le 2^{p+2}\kappa_2(p,\delta)+2^p \kappa_1(p,\delta).
    \end{align*}
    The whole term $R_{T,p}$ is thus bounded by 
    
    $$2^{p+2}\kappa_2(p,\delta)+2^p \kappa_1(p,\delta)=2^{p+2}\psi(2^p,\delta)+2^{p+3}\overline\phi(2^p,\delta)+2^{p+3}\overline\phi(2^{p+1},\delta).$$

    From this point on, the proof depends on the exact definition of $\psi$ and $\overline\phi$. In both cases of \cref{ass:mr_mar} and \cref{ass:ou_mar}, the former writes as \cref{eq:bid_ci}. Therefore,
    $$2^{p+2}\psi(2^p,\delta)=\sqrt{\frac{3\log(3T/\delta)}{2^{p+2}}}=\sqrt{12\cdot 2^p\log(3T/\delta)}.$$
    The latter instead corresponds, by \cref{thm:martingaleMR}, to
    $$2^{p+3}\overline\phi(2^p,\delta)=\begin{cases}
        \sqrt{16\cdot 2^p\log(2T/\delta)}+\frac{8\gamma (k+1)}{(1-\gamma)}&\text{\Cref{ass:ou_mar}}\\
        \sqrt{256\cdot 2^p\log(2T/\delta)}&\text{\Cref{ass:mr_mar}},
    \end{cases}$$
    and
    $$2^{p+3}\overline\phi(2^{p+1},\delta)=\begin{cases}
        \sqrt{16\cdot 2^{p+1}\log(2T/\delta)}+\frac{8\gamma (k+1)}{(1-\gamma)}&\text{\Cref{ass:ou_mar}}\\
        \sqrt{256\cdot 2^{p+1}\log(2T/\delta)}&\text{\Cref{ass:mr_mar}}.
    \end{cases}$$    
    
    Replacing this values in the total regret achieves, for $$C_1=\sqrt{12\log(3T/\delta)}+\sqrt{16\log(2T/\delta)}+\sqrt{32\log(2T/\delta)},\qquad  C_2=\frac{8\gamma (k+1)}{(1-\gamma)}$$
    in \cref{ass:ou_mar} and $$C_1=\sqrt{12\log(3T/\delta)}+\sqrt{256\log(2T/\delta)}+\sqrt{518\log(2T/\delta)},\qquad C_2=0,$$
    the following expression
    \begin{align*}
        R_T&\le \sqrt T+\sum_{p=0}^{\lfloor \log T\rfloor}R_{T,\ell}\\
        &\le \sqrt T+\sum_{p=0}^{\lfloor \log T\rfloor}2^{p+2}\psi(2^p,\delta)+2^{p+3}\overline\phi(2^p,\delta)+2^{p+3}\overline\phi(2^{p+1},\delta)\\
        &\le \sqrt T+\sum_{p=0}^{\lfloor \log T\rfloor}2^{p/2}C_1+C_2\le \sqrt T+\frac{C_1\sqrt{2T} }{\sqrt 2-1}+C_2\log (T).
    \end{align*}  
    Replacing the constants ends the proof.
\end{proof}

\section{Proofs from \Cref{sec:adv}}

\regretadv*
\begin{proof}
    By definition and \Cref{eq:obj}, the regret can be written as follows, for
    $J_t(a,b):=F(b)(M_t-b)+S(a)(a-M_t)$
    \begin{align*}
        R_T&=\sup_{b,a\in [0,1]}\sum_{t=1}^T J_t(b,a)-J_t(B_{t-1},A_{t-1})\\
        &=\sqrt T+\underbrace{\sup_{b\in \As}\sum_{t=1}^T F(b)(M_t-b)-F(B_{t-1})(M_t-B_{t-1})}_{R_T^{\text{bid}}}\\
        &\qquad +\underbrace{\sup_{a\in \As}\sum_{t=1}^T S(a)(a-M_t)-S(A_{t-1})(A_{t-1}-M_t)}_{R_T^{\text{ask}}}.
    \end{align*}
    Below, we show how to bound the $R_T^{\text{bid}}$ part, relative to bidding. The result for $R_T^{\text{ask}}$ follows analogously by just flipping the signs.
    As \Cref{alg:ETP} starts choosing $B_t=0,A_t=1$ in the first $\kappa$ rounds, \Cref{prop:ecdf} ensures that for $\As_t=\As$, with probability at least $1-\delta$
    $$\forall x\in \As,\qquad |\widehat F(x)-F(x)|\le \frac{\sqrt{3\log(3T/\delta)}}{2\sqrt \kappa}.$$

    Fixing $\delta=T^{-1}$,
    since the bids belong to $[0,1]$, 

    $$\E \left[R_T^{\text{bid}}\right]\le \E \bigg[\underbrace{\sup_{b\in \As}\sum_{t=1}^T \widehat F(b)(M_t-b)-\widehat F(B_{t-1})(M_t-B_{t-1})}_{=:\widetilde R_T^{\text{bid}}}\bigg]+\frac{\sqrt{6\log(3T)}T}{\sqrt \kappa}+1.$$

    We now need to bound $\widetilde R_T^{\text{bid}}$.

    \begin{align*}
        \widetilde R_T^{\text{bid}} &=  \sup_{b\in \As}\sum_{t=1}^T\widehat F(b)(M_t-b)-\sum_{t=1}^T\widehat F(B_{t-1})(M_t-B_{t-1})\\
        &\le \kappa+  \underbrace{\sup_{b\in \As} \sum_{t=\kappa+1}^T\widehat F(b)(M_t-b)-\sum_{t=\kappa+1}^T\widehat F(B_{t})(M_t-B_{t})}_{\text{P2}}\\
        &\qquad +  \underbrace{\sum_{t=\kappa+1}^T\widehat F(B_{t})(M_t-B_{t})-\sum_{t=\kappa+1}^T\widehat F(B_{t-1})(M_t-B_{t-1})}_{\text{P2}}.
    \end{align*}
    In the former equations, we have split between the first $\kappa$ rounds, which are devoted to pure exploration, and the following $T-\kappa$, where the interesting parts of the algorithm take place. After this, the regret is split into two terms, $T1$ taking into account the difference between the clairvoyant and $B_{t}$, and $T2$ measuring the difference between $B_{t}$ and $B_{t-1}$. This decomposition, is relatively standard in the analysis of follow-the-leader algorithms.

    Let us examine the first part. As $0\le \varepsilon\le 1/\sqrt T$ almost surely, one has
    $$\left|\text{P1}-\sup_{b\in \As} \sum_{t=\kappa+1}^T\widehat F(b)(M_t+\varepsilon-b)+\sum_{t=\kappa+1}^T\widehat F(B_{t})(M_t+\varepsilon-B_{t})\right|\le 2\sqrt T.$$

    We are going to prove by induction that for any $\kappa\le K\le T$,
    $$\sup_{b\in \As} \sum_{t=\kappa+1}^T\widehat F(b)(M_t+\varepsilon-b)-\sum_{t=\kappa+1}^T\widehat F(B_{t})(M_t+\varepsilon-B_{t})\le 0.$$

    Now, we perform the inductive step, assuming the thesis holds for $K$
    \begin{align*}
        \sup_{b\in \As} \sum_{t=\kappa+1}^{K+1}\widehat F(b)(M_t+\varepsilon-b)&=\sum_{t=\kappa+1}^{K+1}\widehat F(B_{K+1})(M_t+\varepsilon-B_{K+1})\\
        &=\sum_{t=\kappa+1}^{K}\widehat F(B_{K+1})(M_t+\varepsilon-B_{K+1})+\widehat F(B_{K+1})(M_{K+1}+\varepsilon-B_{K+1})\\
        &\le \sup_{b\in \As} \sum_{t=\kappa+1}^{K}\widehat F(b)(M_t+\varepsilon-b)+\widehat F(B_{K+1})(M_{K+1}+\varepsilon-B_{K+1})\\
        &\le \sum_{t=\kappa+1}^{K}\widehat F(B_{t})(M_t+\varepsilon-B_{t})+\widehat F(B_{K+1})(M_{K+1}+\varepsilon-B_{K+1}).
    \end{align*}
    which completes the inductive part. This proves that
    \begin{equation}
        \text{P1}\le 2\sqrt T\label{eq:betheleader}.
    \end{equation}
    Then, we work on $\text{P2}$. This part is a random variable, and we start proving the following expectation upper bound. Let $f:[0,1]\to [0,1]$ be any fixed function and $\Delta_t:=\widehat \mu_{t+1}-\widehat \mu_t$. Since $\varepsilon$ has uniform law on $(0,1/\sqrt{T})$,
    \begin{align*}
        \E[f(B_{t})]&=\sqrt T\int_{0}^{T^{-1/2}}f(\text{argmax}_{b} \widehat F(b)(\widehat \mu_{t+1}+x-b))\ dx\\
        &=\sqrt T\int_{0}^{T^{-1/2}}f(\text{argmax}_{b} \widehat F(b)(\widehat \mu_{t}+\Delta_{t+1}+x-b))\ dx\\
        &=\sqrt T\int_{\Delta_{t+1}}^{T^{-1/2}+\Delta_{t+1}}f(\text{argmax}_{b} \widehat F(b)(\widehat \mu_{t}+x-b))\ dx\\
        &=\underbrace{\sqrt T\int_{0}^{T^{-1/2}}f(\text{argmax}_{b} \widehat F(b)(\widehat \mu_{t}+x-b))\ dx}_{\E[f(b_{t})]}\\
        &\qquad +\sqrt T\int_{T^{-1/2}}^{T^{-1/2}+\Delta_{t+1}}f(\text{argmax}_{b} \widehat F(b)(\widehat \mu_{t}+x-b))\ dx\\
        &\qquad -\sqrt T\int_0^{\Delta_{t+1}}f(\text{argmax}_{b} \widehat F(b)(\widehat \mu_{t}+x-b))\ dx\\
        &\le \E[f(B_{t})]+\sqrt T\Delta_{t+1}\le \E[f(B_{t})]+\sqrt{\frac{T}{(t+1)^2}},
    \end{align*}
    where the key step is the fact that the sample mean change $\Delta_t$ does not exceed $1/(t+1)$ from step $t$ to $t+1$, a consequence of \cref{ass:bounded}. If one is interested in the sum over $\sum_{t=\kappa+1}^T$, we get, for any sequence of functions $f_t$ that are independent on the noise,

    $$\E\left[\sum_{t=\kappa+1}^T f_t(B_{t})-f_t(B_{t-1})\right] \le \sum_{t=\kappa+1}^T \frac{\sqrt T}{t-\kappa}\le \sqrt{T}\log(T).$$

    Taking $f_t(b):=\widehat F(b)(M_t-b),$ that is bounded in $[0,1]$, this shows that
    $$\E\left[\sum_{t=\kappa+1}^T\widehat F(B_{t})(M_t-B_{t})-\sum_{t=\kappa+1}^T\widehat F(B_{t-1})(M_t-B_{t-1})\right] \le \sqrt T\log(T),$$

    i.e. $\E[\text{P2}]\le \sqrt T\log(T).$ Toghether with \Cref{eq:betheleader}, this proves that
    $$\E[\widetilde R_T^{\text{bid}}]\le \kappa+2\sqrt T+\sqrt T\log(T).$$

    As anticipated, the same passages show that
    $\E[\widetilde R_T^{\text{ask}}]\le \kappa+2\sqrt T+\sqrt T\log(T).$ Putting everything together,
    
    \begin{align*}
        \E[R_T]&\le\sqrt T+\E \left[R_T^{\text{bid}}\right]+\E \left[R_T^{\text{ask}}\right]\\
        &\le\sqrt T+\frac{\sqrt{6\log(3T)}T}{\sqrt \kappa}+2+\E \left[\widetilde R_T^{\text{bid}}\right]+\E \left[\widetilde R_T^{\text{ask}}\right]\\
        &\le\sqrt T+\frac{\sqrt{6\log(3T)}T}{\sqrt \kappa}+2+2\kappa+4\sqrt T+2\sqrt T\log(T).
    \end{align*}

    Replacing the value of $\kappa=\left \lceil \log(3T)^{\frac{1}{3}}T^{\frac{2}{3}}\right \rceil$, one gets,
    $$R_T\le 5\log(3T)^{\frac{1}{3}}T^{\frac{2}{3}}+\Os\left(\sqrt T \log(T)\right).$$
\end{proof}

\end{document}